\pgfplotsset{compat=1.17}
\pgfplotsset{
  every axis/.append style={
    no markers,
    grid=major,
    grid style={dashed},
    legend style={font=\tiny},
    ylabel style={font=\scriptsize},
    xlabel style={font=\scriptsize},
    width=\linewidth
  },
  every axis plot/.append style={line width=1.2pt, line join=round},
  every axis legend/.append style={legend columns=2},
  group/group size=3 by 1,
  every x tick label/.append style={alias=XTick,inner xsep=0pt},
  every x tick scale label/.style={at=(XTick.base east),anchor=base west}
}
\definecolor{col1}{RGB}{36, 61, 93}
\definecolor{col2}{RGB}{77, 143, 145}
\definecolor{col3}{RGB}{228, 206, 135}
\definecolor{col4}{RGB}{186, 161, 22}
\definecolor{col5}{RGB}{45, 3, 59}
\definecolor{col6}{RGB}{193, 71, 233}
\definecolor{col7}{RGB}{126, 134, 255}
\definecolor{col8}{RGB}{85, 170, 255}
\tikzset{
  curve1/.style={col1},
  curve2/.style={col2},
  curve3/.style={col3},
  curve4/.style={col4},
  curve5/.style={col5},
  curve6/.style={col6},
  curve7/.style={col7},
  curve8/.style={col8},
  curve9/.style={col1, densely dotted},
  curve10/.style={col2, dotted},
  curve11/.style={col3, dashdotted},
  curve12/.style={col4, dashed},
}
\newtheoremstyle{theoremstyle} 
    {\topsep}                    
    {\topsep}                    
    {}                   
    {}                           
    {\bfseries}                   
    { :}                          
    {.5em}                       
    {}  
\theoremstyle{theoremstyle}
\newtheorem{Pro}{Proposition}
\newcommand{\Dict}{\mathcal{D}}
\newcommand{\Lamb}{\bm{\lambda}}
\newcommand{\Bary}{\julienRevision{Y}(\Lamb , \Dict)}
\newcommand{\BaryAll}{\julienRevision{Y}(\Lamb_{n} , \Dict)}
\newcommand{\BaryLamb}{\julienRevision{Y}(\Lamb)}
\newcommand{\BaryLambO}{\julienRevision{Y}(\Lamb^{t})}
\newcommand{\BaryDict}{\julienRevision{Y}(\Dict)}
\newcommand{\BaryDictO}{\julienRevision{Y}(\Dict_{t})}
\newcommand{\wasserstein}{\julienRevision{W}}
\newcommand{\War}{\wasserstein^2}
\newcommand{\Ass}{\phi_{1}}
\newcommand{\AssM}{\phi_{m}}
\newcommand{\AssI}{\phi_{i}}
\newcommand{\keanu}[1]{\textcolor{black}{#1}}
\newcommand{\frechetEnergy}{E_F}
\newcommand{\dictionaryEnergy}{E_D}
\newcommand{\weightEnergy}{E_W}
\newcommand{\atomEnergy}{E_A}
\newcommand{\individualAtomEnergy}{e_A}
\newcommand{\julien}[1]{\textcolor{black}{#1}}
\newcommand{\julienQuestion}[1]{\textcolor{black}{#1}}
\newcommand{\julienRevision}[1]{\textcolor{black}{#1}}
\begin{document}

\title{Wasserstein Dictionaries\\of Persistence Diagrams}


\author{Keanu Sisouk, Julie Delon, Julien Tierny}
\IEEEtitleabstractindextext{%
\begin{abstract}
This paper presents a computational framework for the concise
encoding of an ensemble of persistence diagrams, in the form of  weighted
Wasserstein barycenters \cite{Turner2014, vidal_vis19}
of a dictionary of \emph{atom diagrams}. We introduce a multi-scale gradient
descent approach for the efficient resolution of the corresponding minimization
problem, which interleaves the optimization of the barycenter weights with the
optimization of the \emph{atom diagrams}. Our approach leverages the
analytic expressions for the gradient of both sub-problems to ensure fast
iterations and it additionally exploits shared-memory parallelism.
Extensive experiments on public ensembles demonstrate the efficiency of our
approach, with Wasserstein dictionary computations in the orders of minutes for
the largest examples. We show the utility of our contributions in two
applications. First, we apply Wassserstein dictionaries to \emph{data
reduction} and reliably compress persistence diagrams by
concisely representing them with their
weights
in the dictionary.
Second, we present a \emph{dimensionality reduction} framework based on a
Wasserstein dictionary defined with a small number of atoms (typically three)
and encode the dictionary as a low dimensional simplex embedded in a visual
space (typically in 2D). In both applications, quantitative experiments assess
the relevance of our framework. Finally, we provide a C++ implementation that
can be used to reproduce our results.
\end{abstract}



\begin{IEEEkeywords}
Topological data analysis, ensemble data, persistence diagrams.
\end{IEEEkeywords}}

\maketitle

\IEEEdisplaynontitleabstractindextext

\IEEEpeerreviewmaketitle

\section{Introduction} 

\IEEEPARstart{A}{s} 
\julien{measurement}
\julien{devices}
and \julien{numerical techniques} 
are becoming more and more 
advanced, 
\julien{datasets are becoming more and more complex 
geometrically.}
This geometrical complexity 
\julien{makes}
interactive exploration and analysis difficult, 
which \julien{challenges the interpretation of the data by the users.}
This motivates the creation of expressive data 
abstractions, capable of encapsulating the main features of interest 
\julien{of} the 
data into simple representations, visually conveying the \julien{main} 
information to the user.

Topological Data Analysis (TDA) \cite{edelsbrunner09} \julien{is a 
family of techniques which precisely addresses this issue. It provides
concise topological descriptors of the main structural features hidden in a 
dataset. The relevance of TDA for analyzing scalar data, its efficiency and 
robustness have been documented in a number of visualization tasks 
\cite{heine16}. Examples of successful applications include }
turbulent combustion\cite{bremer_tvcg11, gyulassy_ev14, 
laney_vis06 }, material sciences \cite{favelier16, gyulassy_vis07,  
gyulassy_vis15, soler_ldav19}, nuclear energy \cite{beiNuclear16}, fluid 
dynamics \cite{kasten_tvcg11, nauleau_ldav22}, bioimaging 
\cite{beiBrain18, topoAngler, carr04}, chemistry \cite{harshChemistry, 
chemistry_vis14, Malgorzata19, olejniczak_pccp23} or astrophysics 
\cite{shivashankar2016felix, sousbie11}. 

\julien{Among the different topological descriptors studied in TDA (such as 
the merge and contour trees \cite{tarasov98, carr00,
MaadasamyDN12, AcharyaN15, CarrWSA16, gueunet_tpds19}, the Reeb graph 
\cite{biasotti08, pascucci07, tierny_vis09, parsa12, DoraiswamyN13, 
gueunet_egpgv19} , or the Morse-Smale complex \cite{forman98, EdelsbrunnerHZ01,
EdelsbrunnerHNP03,
BremerEHP03, Defl15, robins_pami11, ShivashankarN12, gyulassy_vis18}), the 
Persistence Diagram (\autoref{fig:gaussianPDStability}) is a particularly 
prominent example. As described in \autoref{section:distDgm}, it is a concise 
topological descriptor which captures the main structural features in a dataset 
and which assesses their individual importance.}

\julien{In addition to the challenge of increased geometrical complexity 
(discussed above), a new difficulty has recently emerged in many applications, 
with the notion of \emph{ensemble dataset}. These representations describe a 
given phenomenon not only with a single dataset, but with a \emph{collection} 
of datasets, called \emph{ensemble members}. In that context, the topological 
analysis of an ensemble dataset consequently results in an ensemble of 
corresponding topological descriptors (e.g. one persistence diagram per 
ensemble member).}


\julienRevision{Then, a major challenge consists in developing practical tools
for such an ensemble of topological descriptors, to facilitate its processing,
analysis and visualization. Such tools include compression approaches (to
facilitate the manipulation of the ensemble of descriptors) or
visualization methods (for instance, with planar layouts, where each point
encodes a descriptor and the distance between a pair of points encodes the
intrinsic differences between the corresponding descriptors).}

\julienRevision{To enable the above tools,
a key research question deals with the definition of a concise,
yet informative, encoding  of the ensemble of descriptors.
A promising research direction consists in defining a
\emph{dictionary} (i.e. a set of reference descriptors, or \emph{atoms}), such
that the topological descriptors of the ensemble can be concisely encoded
by expressing them as a specific \emph{function} of the atoms (e.g.
a linear combination).
At a technical level,
this requires to
accurately
capture and model
the implicit relations (i.e. the possible functions) which link the
different
descriptors of the ensemble.}

A series of recent works
\julienRevision{started the exploration of this overall direction,}
in particular with the notion of \emph{average topological
representation} \cite{Turner2014, lacombe2018, vidal_vis19, YanWMGW20, 
pont_vis21}.
These techniques can produce a topological descriptor which nicely summarizes
the ensemble. However, they do not capture the implicit relations between the 
different topological descriptors.


\julien{This paper addresses this issue by introducing a simple and efficient 
approach for the estimation of linear relations
between persistence diagrams on their associated Wasserstein metric space.
Inspired by previous work on histograms \cite{schmitz2018wasserstein}, our
approach provides a
linear encoding of the input ensemble, where each  diagram is represented
as a 
weighted Wasserstein barycenter \cite{Turner2014, vidal_vis19} of a 
\emph{dictionary} of automatically optimized diagrams called \emph{atom 
diagrams}. 
We introduce a novel multi-scale gradient descent algorithm 
(\autoref{sec_algorithm}) for the efficient resolution of the corresponding 
minimization problem (\autoref{sec:dictEnc}),
for
which
we interleave the optimization of the barycenter
weights (\autoref{subsect:weight}) with the optimization of the atom diagrams 
(\autoref{subsect:atom}). 
Extensive experiments (\autoref{sec_results}) on public ensembles demonstrate 
the efficiency of our
approach, with Wasserstein dictionary computations in the orders of minutes for
the largest examples.
We illustrate the relevance of our contributions for 
the visual analysis of ensemble data with two applications, data reduction 
(\autoref{sec: dataRed}) and dimensionality reduction 
(\autoref{sec_dimensionalityReduction}).}


\subsection{Related Work}

The literature related to our work can be classified into three main classes:
\textit{(i)} uncertainty visualization, \textit{(ii)} ensemble visualization,
and \textit{(iii)} \julien{topological methods for ensembles.}

\noindent \textbf{\emph{(i)} Uncertainty visualization:}
\julien{Data variability can be represented in the form of \emph{uncertain}
datasets, by considering the data at each point of the domain as
a random variable, associated with an explicit probability density function
(PDF).}
\julien{The analysis and visualization of uncertain data has been recognized as
a major challenge in the visualization community}
\cite{uncertainty1, uncertainty_survey2, uncertainty2,
uncertainty3, uncertainty4, uncertainty_survey1}. Several techniques
\julien{have been proposed}
either dealing with the entropy of the random variables
\cite{uncertainty_entropy},
\julien{or}
their correlation \cite{uncertainty_correlation}
or gradient variation \cite{uncertainty_gradient}.
\julien{The effect of data uncertainty on feature extraction has also been
studied (for instance for level set extraction}
\cite{uncertainty_isosurface1,
uncertainty_isosurface2, uncertainty_isosurface3, uncertainty_isocontour1,
uncertainty_nonparam, uncertainty_isosurface4, uncertainty_interp}),
\julien{for various interpolation schemes and PDF models (e.g. Gaussian
\cite{liebmann1, otto1, otto2, petz} or uniform \cite{bhatia, gunther, szymczak}
 distributions).}
In general, \julien{a central limitation of existing methods for uncertain data
is their design dependence on a specific PDF model (Gaussian, uniform, etc).
This challenges their usability for ensemble data, where the PDFs estimated
from the ensemble can follow an arbitrary, unknown model. Moreover, most of
these techniques do not consider multi-modal PDFs, which are however essential
when multiple trends appear in the ensemble.}

\noindent \textbf{\emph{(ii)} Ensemble visualization:}
\julien{Another approach to model data
variability consists in using ensemble datasets. In this context,}
the
variability is encoded
by a sequence of empirical observations (\textit{i.e} the
members of the ensemble). Established techniques typically compute geometrical 
objects, such as level sets or \julien{streamlines}, \julien{thereby} capturing
the main features for each
member of the ensemble. From there, a \textit{representative} of the
\julien{resulting} ensemble of
geometrical objects \julien{can be} computed. For this task, a few methods have
been
\julien{introduced.}
For instance spaghetti plots \cite{spaghettiPlot} are used in the
case of level-set variability, more particularly for weather data 
\cite{Potter2009, Sanyal2010}\julien{, and} box-plots \cite{whitaker2013,
Mirzargar2014} for
the variability of contours and curves. In the case of trend variability, Hummel 
et al. \cite{Hummel2013} conceived a Lagrangian framework for classification 
purposes in flow ensembles. More specifically, clustering techniques have been 
used to identify the main trends in ensemble of streamlines \cite{Ferstl2016} 
and isocontours \cite{Ferstl2016b}. However, only few techniques have applied 
this strategy to topological objects. Favelier et al. \cite{favelier_vis18} and 
Athawale et al. \cite{athawale_tvcg19} respectively introduced techniques to 
analyze
\julien{the geometrical variability of}
critical points and gradient separatrices. Overlap-based heuristics have
been studied for estimating a representative contour tree from an ensemble 
\cite{Kraus2010VisualizationOU, Wu2013ACT}. 
\julien{In the context of ensembles of histograms, Schmitz et al.
\cite{schmitz2018wasserstein} introduced a dictionary encoding approach based
on optimal transport \cite{cuturi2013}. However, this method is not directly
applicable to persistence diagrams. It focuses on a fundamentally different
object (histograms). Thus, the employed distances, geodesics and barycenters are
defined differently (in particular in an entropic form \cite{cuturi2013,
cuturi2014}) and the algorithms for their computations are drastically
different (based on Sinkhorn matrix scaling \cite{Sinkhorn}). In contrast, our
work focuses on \emph{Persistence diagrams} (\autoref{section:distDgm}), whose
associated metric space is also inspired from
optimal transport, but with various formal and computational specificities
(\autoref{sec_metricSpace}). Moreover, our approach is based on gradient descent
which, from our experience, provides better practical convergence for this kind
of problems
than
quasi-Newton techniques. Finally, we contribute a multi-scale progressive
optimization algorithm, which provides improved solutions in comparison to a
naive optimization.}


\noindent \textbf{\emph{(iii)} Topological methods for ensembles:}
\julien{To analyze the relations between the persistence diagrams of an
ensemble, several key low level notions are required, such as the notion of
distance and barycenters between diagrams, for which we review the literature
here. Inspired by optimal transport \cite{Kantorovich, monge81}, the
\emph{Wasserstein} distance between persistence diagrams \cite{edelsbrunner09}
(\autoref{sec_metricSpace}) has been extensively studied
\cite{CohenSteinerEH05, Cohen-Steiner2010}. It relies on a bipartite assignment
problem, for which exact \cite{Munkres1957} and approximate \cite{Bertsekas81,
Kerber2016} implementations are available in open-source \cite{ttk17}. Based on
this distance, several approaches have explored the possibility to define a
\emph{representative} diagram of an ensemble of persistence diagrams, with the
notion of \emph{Wasserstein} barycenter. Turner et al. \cite{Turner2014}
introduced the first approach for the computation of such a barycenter.
Lacombe et al. \cite{lacombe2018} presented an approach based on entropic
transport \cite{cuturi2013, cuturi2014}. However, it requires a
pre-vectorization step which is subject to several parameters, and which is not
conducive to visualization tasks (features can no longer be individually tracked
beyond the pre-vectorization step). In contrast, Vidal
et al. \cite{vidal_vis19} introduced a vectorization-free approach which
maintains the feature assignments
explicitly. 
It is based on
a progressive
scheme, which greatly accelerates computation in practice. These
concepts have been recently investigated for other topological descriptors,
such as merge trees \cite{YanWMGW20, pont_vis21}. Recently,
\julienRevision{several authors have investigated another compact representation of
ensembles of topological descriptors, via a basis of representative descriptors.
For instance, Li et al. \cite{LI2023}
introduce a vectorization for merge trees, which was subsequently used by matrix
sketching procedures \cite{woodRuff2014} to create a basis of representative
merge trees. In contrast, our work focuses on persistence diagrams (which can
encode different features). Also, it directly operates on the Wasserstein metric
space of persistence diagrams,
thereby avoiding
the typical technical
difficulties associated with vectorizations (e.g. quantization and/or linearization
artifacts, potential stability issues, possible inaccuracies in vectorization reversal, etc.).}
Pont et al.
\cite{pont2022principal} introduced the notion of principal geodesic analysis
of merge trees (and persistence diagrams), with the same overall goal of
characterizing the relations between the topological descriptors of an ensemble.
In this work, we introduce a different formulation of the problem, which is both
simpler (based on the construction of weighted Wasserstein barycenters) and more
flexible (our optimization is not subject to complicated constraints such as
geodesic orthogonality). This results in a simpler implementation and
slightly faster
computations (\autoref{sec_results}).}

\subsection{Contributions}

This paper makes the following new contributions:
\begin{enumerate}

\item \julien{\emph{A simple approach for the linear encoding of 
Persistence Diagrams:} We formulate the linear encoding of an ensemble of 
persistence diagrams on their associated Wasserstein metric space as a
dictionary optimization
(\autoref{sec:dictEnc}), which simply optimizes, simultaneously, \emph{(i)}
the barycentric weights
(\autoref{subsect:weight}) and \emph{(ii)} the atoms of the dictionary 
(\autoref{subsect:atom}).}

\item \julien{ \emph{A multi-scale algorithm for the computation of a
Wasserstein dictionary of
Persistence Diagrams:}
We introduce a novel, efficient algorithm for the 
optimization of the above dictionary encoding problem. 
Our algorithm leverages the analytic expressions of the gradient of both 
of the above sub-problems, to ensure fast iterations. Moreover, in comparison 
to a naive optimization, our algorithm reaches solutions of improved energy
thanks
to a multi-scale
strategy. Finally, we leverage shared-memory
parallelism to further improve
performances.}

\item \julien{ \emph{An application to data reduction:} We present an 
application to data reduction (\autoref{sec: dataRed}), where the persistence 
diagrams of the input ensemble are significantly compressed, by solely storing 
their barycentric weights as well as the atom diagrams.}

\item \julien{\emph{An application to dimensionality reduction:} We present an 
application to dimensionality reduction 
(\autoref{sec_dimensionalityReduction}), by embedding each input diagram as a 
point within a 2D view, based on its weights relative to a 
Wasserstein dictionary composed of three atoms (thereby defining a 2-simplex).}

\item \julien{\emph{Implementation:} We provide a C++ implementation of 
our algorithms that can be used for reproducibility purposes.}
%
%

\end{enumerate}


\section{Preliminaries}

This section presents the theoretical elements needed for the formalization of
our work.
We
introduce the topological data representation \julien{that we} use - the
persistence diagram (\autoref{section:distDgm}) - and
\julien{its associated metric}
(\autoref{sec_metricSpace}). Then we define \julien{the notion of Wasserstein
barycenter of persistence diagrams (\autoref{sec:bary}), which is a core
component of our approach (\autoref{sec:dictEnc}).}


\subsection{Persistence diagrams}
\label{section:distDgm}

\begin{figure}
 \centering 
 \includegraphics[width=\columnwidth]{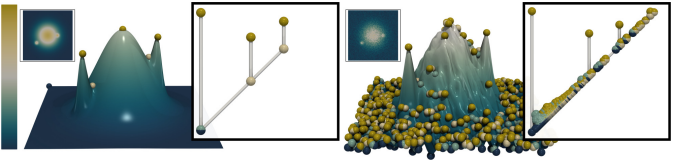}
 \caption{Persistence diagrams of a clean (left) and noisy (right)
 \julien{terrain}
 (dark blue spheres: minima, dark
yellow: maxima, other: saddles).
The three \julien{main hills} are clearly represented \julien{with} long
bars in the persistence \julien{diagrams}. In the noisy persistence diagram,
small bars 
\julien{encode} noise.}
 \label{fig:gaussianPDStability}
\end{figure}

\begin{figure}
 \centering 
 \includegraphics[width=\columnwidth]{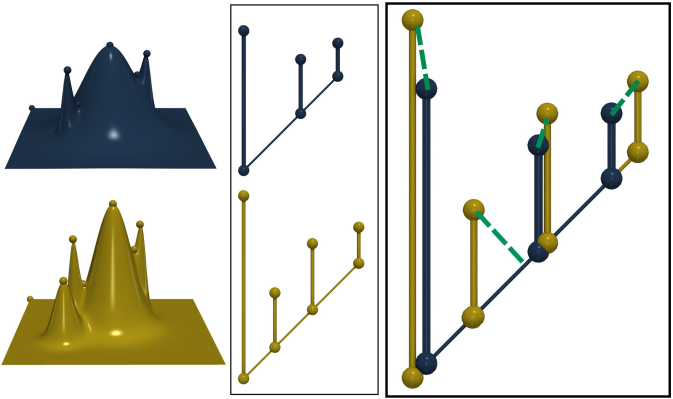}
 \caption{\julien{Optimal matching (green dashes, right) with regard to $\wasserstein$
between
the two persistence diagrams (center) of two terrains (left).}}
 \label{fig:matchings}
\end{figure}

\julien{Each input ensemble member is given in the form of}
a piecewise linear (PL)
scalar field $f: \mathcal{M} \rightarrow \mathbb{R}$ defined on a PL
$(d_\mathcal{M})$-manifold $\mathcal{M}$, with $d_\mathcal{M}\leq 3$ for our
applications.
\julien{Given an isovalue}
$w\in\mathbb{R}$, we denote $f_{-\infty}^{-1}(w) = f^{-1}\big((-\infty,
w]\big)$ the sub-level set of $f$ at $w$.
As $w$ increases, the topology of
$f_{-\infty}^{-1}$ changes at specific points of $\mathcal{M}$\julien{, called
\emph{``critical points''}}.
Critical points are classified by
their index $\mathcal{I}$: 0 for minima, 1 for 1-saddles,
\julienRevision{$d_\mathcal{M} - 1$} for
\julienRevision{$(d_\mathcal{M}-1)$}-saddles, and
\julienRevision{$d_\mathcal{M}$} for
maxima \julien{(in practice, $f$ is enforced to
contain only isolated, non-degenerate critical points \cite{edelsbrunner90,
EdelsbrunnerHZ01})}.
\julien{According to the Elder rule \cite{edelsbrunner09}, each}
topological feature of
$f_{-\infty}^{-1}(w)$
\julien{(e.g. a connected component, a cycle, a void)}
can be associated with a pair of critical points $(c,
c')$ \julien{(with $f(c) < f(c')$ and $\mathcal{I}_c = \mathcal{I}_{c'}-1$)},
corresponding to its \textit{birth} and \textit{death} \julien{during the sweep
of the data by $w$ (from $-\infty$ to $+\infty$)}.
\julien{Such a pair $(c, c')$ is called a \emph{persistence pair}.}
\julien{For instance, when}
two connected components of $f_{-\infty}^{-1}(w)$
meet at a critical point $c'$, the younger one (created last) \textit{dies},
letting the oldest one (created first)
\julien{survive}.
\julien{Then, the critical points}
are
\julien{represented} visually as 2D bar codes
\julien{where the horizontal axis encodes the \emph{birth} of a feature
$\big($noted $b = f(c)\big)$ and where the vertical axis encodes its lifespan
and \emph{death} $\big($noted $d = f(c')\big)$. This representation is called
the \emph{Persistence Diagram}, noted $X$.
In the diagram,}
salient features stand
out from the diagonal and \julien{small-amplitude} noise
\julien{is typically}
located near the diagonal, as
\julien{shown}
in \autoref{fig:gaussianPDStability}.
\julien{In the remainder, we enumerate the points of $X$ with indices such that
$X = \{x^1,\hdots,x^K\}$ and we note $i_X = \{1,\hdots,K\}$ the set of indices
\julienRevision{(i.e. the set of all integers going from $1$ to $K$)
}.
}


\begin{figure}
 \centering 
 \includegraphics[width=\columnwidth]{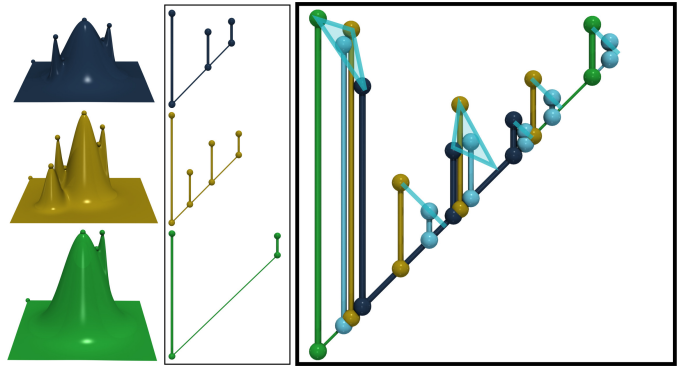}
 \caption{\julienRevision{Wasserstein barycenter (cyan, uniform weights) of $3$
persistence diagrams (center) of
$3$ terrains (left).
Each
barycenter point
(cyan sphere) is the barycenter of
its matched points
in the inputs
(cyan triangle).}}
 \label{fig:barycenter}
\end{figure}

\subsection{Wasserstein distance}
\label{sec_metricSpace}
\julien{To evaluate the distance between two diagrams, a typical pre-processing
step consists in augmenting each diagram,}
by
including the \julien{diagonal} projection of all \julien{the} off-diagonal
points of the other \julien{diagram}.
To illustrate that, let us consider $X_1 =
\{x_1^1,\hdots,x_1^{K_1}\}$ and $X_2 = \{x_2^1,\hdots,x_2^{K_2}\}$.
\julien{Given an off-diagonal point $x$ (i.e. $b < d$), let
$\Delta x$ be its diagonal projection, specifically:
$\Delta x = (\frac{b+d}{2}, \frac{b+d}{2})$.
Let}
$P_1$ and $P_2$ \julien{be} the sets of the
\julien{diagonal}
projections of the
points
\julien{of}
$X_1$ and $X_2$ respectively.
\julien{Then, $X_1$ and $X_2$ are augmented into $X_1{'}$ and $X_2{'}$}
by
considering $X_1{'} = X_1 \cup P_2$ and $X_2{'} = X_2 \cup P_1$.
\julien{This}
ensures
that $|X_1^{'}| = |X_2^{'}| = K$
\julien{(which eases distance evaluation).
Specifically, we
consider in the remainder that the notations $X_1$ and $X_2$ refer to
\emph{augmented diagrams} (i.e. $|X_1| = |X_2| = K$).}

\julien{Then, given two persistence diagrams $X_1$ and $X_2$}
the $L^2$-Wasserstein distance \julien{between them is defined as:}
\begin{equation}
\label{wassDist}
\wasserstein(X_1,X_2) = \min_{\psi : i_{X} \xrightarrow{bij} i_{X}}
\sqrt{\sum\limits_{j = 1}^{K} c(x_1^j,x_2^{\psi(j)})},
\end{equation}
where $\psi$, the matching, is a bijection
\julienRevision{of the index set $i_X$
towards itself (i.e. $\psi$ is  a permutation of $i_X$).}
This bijection matches one
\julien{persistence} pair
\julien{$x_1$} of $X_1$ \julienRevision{(i.e. a pair of critical points of index
$\mathcal{I}$ and $\mathcal{I}+1$ respectively)}
\julien{to}
one \julien{persistence} pair $x_2$ in $X_2$ \julienRevision{(another pair of critical
points, with the \emph{same} indices $\mathcal{I}$ and $\mathcal{I}+1$)}  whether $x_1$
and $x_2$ are on the diagonal or not
(\autoref{fig:matchings}).
\julien{Given} the cost $c$ in the definition of $\wasserstein$, $\psi$ is the optimal
way to transport
$X_1$ onto $X_2$.
\julien{In our work, we consider the cost $c(x, y) = 0$ when $x$ and $y$ are
both diagonal points, and}
$c(x,y) = \|x - y\|^2$ \julien{otherwise ($\|x - y\|$ denotes the
Euclidean distance
between $x$ and $y$ in the birth/death space).}

\subsection{Wasserstein barycenter}
\label{sec:bary}

\julien{Given a set of persistence diagrams $\Dict = \{a_1 ,\ldots , a_m\}$
(which we will call in the remainder \emph{dictionary}),
\julienRevision{a}
Wasserstein
barycenter (\autoref{fig:barycenter}) -- or Fr\'echet mean --}
of the dictionary $\Dict$
with
barycentric
weights
$\Lamb = (\lambda_1 ,\ldots , \lambda_m)$ is a diagram,
\julien{which we note  $\Bary$ in the following}, which minimizes the
Fr\'echet energy \julienRevision{$\frechetEnergy(B)$}:
\begin{equation}
\label{eqnBaryDef}
\nonumber
\julien{\julienRevision{\frechetEnergy(B)} = \sum\limits_{i = 1}^{m} \lambda_i \War(a_i,
B).}
\end{equation}
$\Lamb$ is such that $\lambda_i \geq 0$ and $\displaystyle\sum\limits_{i=1}^{m} 
\lambda_i = 1$. \julien{We} denote $\Sigma_m$ the simplex of such vectors.
\julien{Intuitively,
$\Bary$ is a diagram which minimizes the above linear
combination,
given $\Lamb$,
of its squared Wasserstein distances to the
diagrams
of the dictionary $\Dict$.}

\julienRevision{The computation of the barycenter $\Bary$ requires generalizing
the pairwise augmentation described in \autoref{sec_metricSpace}.
Specifically, each non-diagonal point of each dictionary
diagram $a_i$
is projected to the diagonal of all
the other dictionary diagrams $a_j$ (with $i \neq j$). After this first augmentation, each
dictionary diagram $a_i$ contains $K = \sum_{i = 1}^{m}|a_i|$ points (where $|a_i|$ is the
number of non-diagonal points in $a_i$). Then $\Bary$ is typically initialized
on the dictionary diagram $a_*$ which initially minimizes the Fr\'echet energy
$\frechetEnergy$. Let $|\Bary| = |a_*|$ be the number of non-diagonal points
of $a_*$. Then, all the non-diagonal points of all the atoms are projected
on the diagonal of $\Bary$, and reciprocally, all the non-diagonal points of
$\Bary$ are projected on the diagonal of each atom. Thus, at this stage, after this
second augmentation, each dictionary diagram $a_i$ and the candidate barycenter
$\Bary$ contains $K = \sum_{i = 1}^{m}|a_i| + |\Bary|$ points (mostly
on the diagonal).}
%
%


\julienRevision{Next, we optimize
$\Bary$ in practice with the approach by
Vidal et al. \cite{vidal_vis19}\julienRevision{, which provides a time-efficient
approximation of the original algorithm by Turner et al. \cite{Turner2014}.
Similar to Turner et al., it is based on an
iterative optimization, where each iteration includes
an \emph{Assignment} step, followed by an \emph{Update} step.
Specifically, the \emph{Assignment} step computes the optimal
assignments $\psi_i$ between the candidate $\Bary$ and each dictionary diagram
$a_i$.
Next, the \emph{Update} step minimizes the
Fr\'echet energy $\frechetEnergy$ under the current assignments $\psi_i$.
Since the $L^2$-Wasserstein distance considers the Euclidean distance as a cost
function
(\autoref{sec_metricSpace}), this minimization is achieved
by simply
placing each point of $\Bary$ at the arithmetic mean in the birth/death space of
its assigned points in the dictionary diagrams. This can be done since the
arithmetic mean minimizes the Fr\'echet energy defined respectively to Euclidean
distances (more sophisticated \emph{Update} procedures, e.g. based on an
optimization routine, would need to be
derived for other distances in the birth/death space).
After this \emph{Update} step, the subsequent
\emph{Assignment}
further improves the assignments $\psi_i$, hence
decreasing the Fr\'echet energy constructively at each iteration.}}

\julienRevision{The approach
by Vidal et al. \cite{vidal_vis19} revisits this framework by integrating
tailored approximations throughout
the computation. Specifically, it approximates the optimal
assignments $\psi_i$ with the fast \emph{Auction} optimization
\cite{Bertsekas81} (instead of the traditional, yet prohibitive, \emph{Munkres}
algorithm
\cite{Munkres1957}). Further, it improves performance with a mechanism called
\emph{price memorization}, which enables the initialization of the \emph{Auction}
optimization with the assignments $\psi_i$ computed in the previous \emph{Assignment}
step. This allows the barycenter optimization to resume the assignment
optimization instead of re-computing it from scratch
at each iteration.
This approach also includes a strategy for the adaptive increase of the
\emph{accuracy} parameter of the \emph{Auction} optimization, allowing for fast
assignments in the early iterations of the barycenter algorithm, and
slower but more accurate assignments towards its convergence.}

\section{Wasserstein Dictionary Encoding} \label{sec:dictEnc}

\julien{This section formalizes our approach for the Wasserstein dictionary
encoding of an ensemble of persistence diagrams.
\autoref{sec_overview} provides an overview of our approach, which
interleaves
barycentric weight optimization ($\Lamb$) with atom optimization ($\Dict$).
Finally, Secs. \ref{subsect:weight} and \ref{subsect:atom} detail the gradient
estimation for both sub-problems.
}


\subsection{Overview}
\label{sec_overview}

\julien{Let $\{X_1, \hdots , X_N\}$ be the input ensemble of $N$
persistence diagrams. The goal of our approach is to jointly optimize two
sub-problems:
\begin{itemize}
 \item Optimize a set $\Dict$ of $m$ \emph{reference}
persistence diagrams, called
the \emph{atoms} of the \emph{Wasserstein dictionary}
$\Dict$;
 \item Optimize for each input diagram $X_n$ a vector of $m$ barycentric weights
$\Lamb_n \in \Sigma_m$,
in order to accurately approximate $X_n$ with a
Wasserstein barycenter $\BaryAll$
(\autoref{sec:bary}).
\end{itemize}
}

\julien{This can be formalized as a joint optimization,
where one wishes to
find the optimal barycentric weights $\Lambda^* = \Lamb_1^* , \hdots, \Lamb_N^*$
and the optimal Wasserstein dictionary $\Dict_{*} = \{a_1^*, \hdots, a_m^*\}$
(with $m \ll N$),
in
order to minimize the following \emph{dictionary energy}:
\begin{eqnarray}
  \label{eq_dictionaryEnergy}
 \dictionaryEnergy(\Lambda, \Dict) = \sum\limits_{n = 1}^{N} \War\big(\BaryAll
,X_n\big).
\end{eqnarray}
}

\julien{Our overall strategy for optimizing \autoref{eq_dictionaryEnergy}
consists
in
iteratively
interleaving two sub-optimizations:
\begin{enumerate}
 \item For a fixed dictionary $\Dict$,
 the set of
barycentric weights $\Lambda$ is optimized with one step of gradient descent
(\autoref{subsect:weight});
  \item For a fixed set of barycentric weights $\Lambda$,
  the dictionary $\Dict$ is optimized with one step of
gradient descent
(\autoref{subsect:atom}).
\end{enumerate}
Then, this sequence of two sub-procedures is
iterated until a pre-defined stopping
condition is reached (\autoref{section:Prog}).
}

\julien{Finally, the output of our approach is
the optimized Wasserstein dictionary $\Dict_{*}$ (a set of $m$ atom
diagrams) and, for each input diagram $X_n$, a vector of
weights $\Lamb_n^* \in \Sigma_m$, which can be
interpreted
as the
barycentric coordinates
of $X_n$ in \julienRevision{$\Dict_{*}$} (thus capturing linear relations between the input
diagrams on the Wasserstein dictionary).}

\subsection{Weight optimization} \label{subsect:weight}

\julien{This section details the optimization of the barycentric weights
$\Lambda = \Lamb_1 , \hdots, \Lamb_N$. Let $\Dict = \{a_1 , \ldots , a_m \}$ be
a
fixed dictionary of atom diagrams, with $m > 0$.
}
\julien{Let $X$ be a diagram of the input ensemble. For a given set of weights
$\Lamb = \big(\lambda_1 , \hdots , \lambda_m \big)$, let $\BaryLamb =
\big\{y^1(\Lamb),\hdots, y^K(\Lamb)\big\}$ be its barycentric approximation,
relative to $\Dict$ \julienRevision{(i.e. each point $y^j(\Lamb)$ of
$\BaryLamb$ approximates a point in $X$)}.}

\julienRevision{We recall that after augmentation (\autoref{sec:bary}),
$\BaryLamb$ and the atoms contain $\sum_{i = 1}^{m}|a_i| + |\BaryLamb|$ points each,
where $|a_i|$ and $|\BaryLamb|$ denote the number of \emph{non-diagonal} points in
$a_i$ and $\BaryLamb$
respectively.
Then, in order to compare it to $X$,
$\BaryLamb$ is further augmented by projecting on its diagonal the $|X|$
non-diagonal points of $X$. Then, at this stage, the size $K$ of $\BaryLamb$ is
given by
$K = \sum_{i = 1}^{m}|a_i| + |\BaryLamb| + |X|$. We augment similarly $X$ (i.e. by
projecting the non-diagonal points of $\BaryLamb$ to its diagonal) and the $m$ atoms (i.e. by
projecting the non-diagonal points of $X$ to their diagonals). Then, at this
point,
$\BaryLamb$, $X$, and
the $m$ atoms $a_i$
all have the same size $K = \sum_{i = 1}^{m}|a_i| + |\BaryLamb| + |X|$.}

\julien{In this section, we describe a
gradient descent on $\Lamb$
to minimize the \emph{weight energy}:
\begin{eqnarray}
\label{eq_weightEnergy}
\weightEnergy(\Lamb) = \War\big(\BaryLamb ,X\big).
\end{eqnarray}
A step of the corresponding gradient descent is illustrated in
\autoref{fig:weight}.}

\begin{figure}
 \centering 
 \includegraphics[width=\columnwidth]{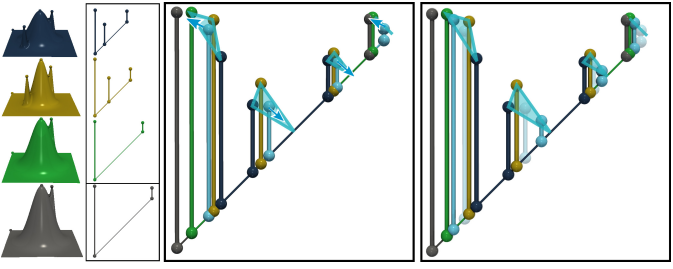}
 \caption{\julienRevision{Optimizing the weights of the
 barycenter
$\BaryLamb$ (cyan diagram) to improve its approximation of $X$ (grey diagram),
given a fixed Wasserstein dictionary $\Dict$ of $3$ atoms (dark blue, yellow,
green). At a given iteration $t$ (center), a step $\rho_{\Lamb}$ is made along
the
gradient of the weight energy $\weightEnergy$ (cyan arrows), resulting in an
improved estimation at iteration $t+1$ (right).}}
 \label{fig:weight}
\end{figure}

\julien{Given the set of optimal matchings $\phi_{1} , \hdots ,
\phi_{m}$ between $\BaryLamb$ and the $m$ atoms,
the $j^{th}$ point of $\BaryLamb$, noted $y^j(\Lamb)$,
is given by:
\begin{eqnarray}
\label{eq_diagramCombination}
 \forall j \in \{1,\hdots, K\},\ y^j(\Lamb) = \sum\limits_{i=1}^{m}\lambda_i
a_i^{\phi_{i}(j)}.
\end{eqnarray}
}

\julienRevision{In other words, the $j^{th}$ point $y^j(\Lamb)$ of the diagram
$\BaryLamb$ is a linear combination (with the weights $\Lamb$) of the $m$ points
it matches to in the atoms (one point per atom $a_i$),
under the optimal assignments $\phi_{i}$
(i.e. minimizing \autoref{wassDist}).}

\julien{For a fixed set of assignments  $\phi_{1} , \hdots ,
\phi_{m}$, the Wasserstein distance (\autoref{wassDist}) between $X$ and
its approximation
$\BaryLamb$ is then:
\begin{eqnarray}
  \nonumber
 \weightEnergy(\Lamb) = \War\big(\BaryLamb ,X\big) =
 \julienRevision{\sum\limits_{j=1}^{K}
 c\bigl(y^j(\Lamb), x^{\psi(j)}\bigr),}
\end{eqnarray}
where \julienRevision{$\psi$} denotes the optimal assignment \julienRevision{(\autoref{wassDist})} between $X$ and
its approximation
$\BaryLamb$.}
\julienRevision{When $y^j(\Lamb)$ and $x^{\psi(j)}$ are not both diagonal
points, the cost
$c\bigl(y^j(\Lamb), x^{\psi(j)}\bigr)$ is given by their squared Euclidean
distance in
the birth/death space (it is zero otherwise, see \autoref{sec_metricSpace}).
Then, by exploiting \autoref{eq_diagramCombination}, $\weightEnergy(\Lamb)$ can be re-written as:
\begin{eqnarray}
  \nonumber
 \weightEnergy(\Lamb)  =  \War\big(\BaryLamb ,X\big)  & = &
 \sum\limits_{j=1}^{K} \|
  y^j(\Lamb) - x^{\psi(j)} \|^2 \\
  \nonumber
 & = &
 \sum\limits_{j=1}^{K} \|
\biggl(\sum\limits_{i=1}^{m} \lambda_i a_i^{\AssI(j)}\biggr) - x^{\psi(j)} \|^2.
\end{eqnarray}
Since $\sum\limits_{i=1}^{m} \lambda_i = 1$, $\weightEnergy(\Lamb)$ can
finally be re-written as:}

\begin{eqnarray}
  \label{eq_energyFixedAssignment}
 \weightEnergy(\Lamb) = \War\big(\BaryLamb ,X\big) =  \sum\limits_{j=1}^{K} \|
\sum\limits_{i=1}^{m} \lambda_i (a_i^{\AssI(j)} - x^{\psi(j)}) \|^2.
\end{eqnarray}

\julienRevision{Intuitively, this energy measures the error (in terms of
Wasserstein distance) induced by
approximating the input diagram $X$ with its barycentric approximation
$\BaryLamb$.
 In \autoref{eq_energyFixedAssignment},
 it is computed for each $j^{th}$ point $y^j(\Lamb)$
 of the
 diagram
 $\BaryLamb$, by considering the birth/death distances between the
 points  $y^j(\Lamb)$ maps to, in the atoms on one hand and in the input diagram
 $X$
 on the other.}

\julienRevision{Then,}
by applying the chain rule on
\autoref{eq_energyFixedAssignment}, the gradient of the weight energy
(\autoref{eq_weightEnergy}) is given by:
\begin{equation}
\label{eq_weightGradient}
 \nabla \weightEnergy(\Lamb) =
 2 \sum\limits_{i=1}^{m} \sum\limits_{j=1}^{K}
\begin{bmatrix}
(a_1^{\Ass(j)} - x^{\psi(j)})^T \\
\vdots \\
(a_m^{\AssM(j)} - x^{\psi(j)})^T
\end{bmatrix}\big(\lambda_i (a_i^{\AssI(j)} -
x^{\psi(j)})\big).
\end{equation}

Now that the gradient of the weight energy is available
(\autoref{eq_weightGradient}), we can proceed to gradient descent. Specifically,
the barycentric weights at the iteration $t+1$ (noted $\Lamb^{t+1}$)
are obtained by a step $\rho_{\Lamb}$ from the weights at the iteration $t$
(noted
$\Lamb^{t}$) along the gradient:
\begin{equation}
  \label{eq_lambdaUpdate}
 \Lamb^{t+1} = \Pi_{\Sigma_m} \big(\Lamb^t - \rho_{\Lamb} \nabla
 \weightEnergy(\Lamb^{t}) \big),
\end{equation}
where $\Pi_{\Sigma_m}$ is the projection onto the simplex of admissible
barycentric weights (i.e. positive and summing to $1$, c.f. \autoref{sec:bary}).
Since \julienQuestion{$\keanu{\nabla} \weightEnergy$} is $L$-\julien{Lipschitz}
(see the \julien{computation details}
in
\julienQuestion{Appendix A}), \julien{a gradient step will guarantee an energy
decrease
as long as:}
\begin{equation}
  \label{eq_gradientStep}
  \nonumber
  \rho_{\Lamb} \leq  \left[2 \displaystyle\sum\limits_{j=1}^{K}  \begin{Vmatrix}
 (a_1^{\Ass(j)} - x^{\psi(j)})^T \\
\vdots \\
 (a_m^{\AssM(j)} - x^{\psi(j)})^T
\end{Vmatrix}^{2}\right]^{-1} < \frac{1}{L}.
\end{equation}

Overall, for a given input diagram $X$, each iteration $t$ of gradient descent
for the optimization of
$\weightEnergy$ consists in the following steps:
\begin{enumerate}[leftmargin=0.75cm]
 \item Computing the Wasserstein barycenter $\BaryLambO$
(\autoref{sec:bary});
 \item Computing the Wasserstein distance $\War\big(\BaryLambO ,X\big)$
(\autoref{eq_weightEnergy});
  \item Estimating the gradient $\nabla \weightEnergy(\Lamb)$
(\autoref{eq_weightGradient});
  \item{Applying one step $\rho_{\Lamb}$ of gradient descent
(\autoref{eq_lambdaUpdate}).}
\end{enumerate}

\subsection{Atom optimization} \label{subsect:atom}

\begin{figure}
 \centering 
 \includegraphics[width=\columnwidth]{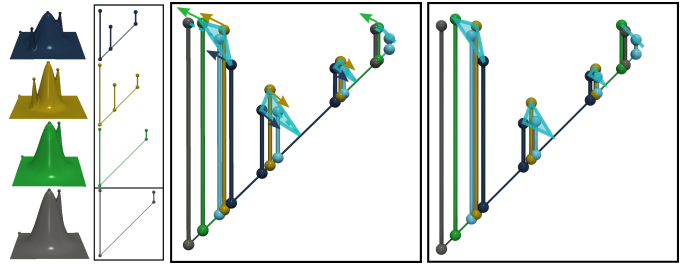}
 \caption{\julienRevision{Optimizing the atoms of the Wasserstein dictionary $\Dict$
(dark blue, yellow and green diagrams). At a given iteration $t$ (center), a
step $\rho_{\Dict}$ is made along the gradient of the pointwise atom energy
$\individualAtomEnergy$ (arrows on each triangle), resulting in a dictionary
(right) that enables an improved barycentric approximation ($\BaryDict$, cyan)
of the input diagram $X$ (grey).}}
 \label{fig:atom}
\end{figure}

\julien{This section details the optimization of the atoms of the dictionary
$\Dict = \{a_1 , \ldots , a_m\}$.
Similarly to
\autoref{subsect:weight},
let $X$ be a diagram of the input ensemble and
let $\Lamb \in \keanu{\Sigma_m}$ be its -- fixed -- vector of
barycentric weights.
For a given dictionary $\Dict$, let
$\BaryDict = \big\{ y^1(\Dict), \hdots , y^K(\Dict) \big\}$ be the barycentric
approximation of $X$, relative to $\Lamb$. In this section, we describe a step
of
gradient descent on $\Dict$ to minimize the following \emph{atom energy}:
\begin{equation}
  \label{eq_atomEnergy}
  \nonumber
  \atomEnergy(\Dict) = \War\big(\BaryDict ,X\big).
\end{equation}
A step of the corresponding gradient descent is illustrated in
\autoref{fig:atom}.
}

\julien{Given the set of optimal matchings $\phi_{1} , \hdots ,
\phi_{m}$ between $\BaryDict$ and the $m$ atoms,
the $j^{th}$ point of $\BaryDict$, noted $y^j(\Dict)$,
is
given by:
\begin{eqnarray}
\nonumber
 \forall j \in \{1,\hdots, K\},\ y^j(\Dict) = \sum\limits_{i=1}^{m}\lambda_i
a_i^{\phi_{i}(j)}.
\end{eqnarray}
\julienRevision{This expression is identical to \autoref{eq_diagramCombination}
(\autoref{subsect:weight}).
However, $y^j$ now depends on $\Dict$, which is the variable of the current optimization.
Then,}
%
the gradient of $y^j(\Dict)$ with regard to $\Dict$
is \julienRevision{simply} given by:
\begin{eqnarray}
  \label{eq_pointGradient}
  \nabla y^j(\Dict) = \begin{bmatrix}
\lambda_1 & \cdots &\lambda_m
\end{bmatrix}^T.
\end{eqnarray}
}

For a fixed set of assignments  $\phi_{1} , \hdots ,
\phi_{m}$, the Wasserstein distance (\autoref{wassDist}) between $X$ and
its approximation
$\BaryLamb$ is then:
\begin{eqnarray}
  \label{eq_energyDictFixedAssignment}
  \nonumber
 \atomEnergy(\Dict) = \War\big(\BaryDict ,X\big) \julienRevision{=  \sum\limits_{j=1}^{K}
 c\bigl(y^j(\Dict), x^{\psi(j)}\bigr),}
\end{eqnarray}
where $\psi(j)$ denotes the optimal assignment between $X$ and
its barycentric approximation
$\BaryDict$.
\julienRevision{Similarly to \autoref{eq_energyFixedAssignment}
(\autoref{subsect:weight}), the above equation can
be re-written as:
\begin{eqnarray}
  \label{eq_energyDictFixedAssignment}
  \nonumber
 \atomEnergy(\Dict) = \War\big(\BaryDict ,X\big) =  \sum\limits_{j=1}^{K} \|
\sum\limits_{i=1}^{m} \lambda_i (a_i^{\AssI(j)} - x^{\psi(j)}) \|^2.
\end{eqnarray}}

Let $\Dict^j = [a_{1}^{\phi_{1}(j)}, \hdots ,
a_{m}^{\phi_{m}(j)}]^T$ be the $(m \times 2)$-matrix formed by the atom points
matching to a given point $y^j(\Dict)$ of $\BaryDict$, via the fixed
assignments $\phi_{1} , \hdots ,
\phi_{m}$.
\julienRevision{Specifically, the $i^{th}$ line of this matrix refers to the point
$a_{i}^{\phi_{i}(j)}$ in the
atom $a_i$ where $y^j(\Dict)$ maps to (via the optimal assignment $\phi_i$).
For this line, the two
columns of the matrix encode the birth/death coordinates of the point
$a_{i}^{\phi_{i}(j)}$.}
Then, the
\emph{pointwise atom energy} of $y^j(\Dict)$, noted
$\individualAtomEnergy(\Dict^{j})$, is  given by:
\begin{eqnarray}
  \label{eq_individualAtomEnergy}
  \individualAtomEnergy(\Dict^{j}) =
 \left\lVert \sum\limits_{i=1}^{m} \lambda_i
(a_{i}^{\AssI(j)} - x^{\psi(j)}) \right\rVert^2.
\end{eqnarray}

\julien{Then, by applying the chain rule on \autoref{eq_individualAtomEnergy}
(using \autoref{eq_pointGradient}), the gradient of the pointwise atom energy
is given by:
\begin{eqnarray}
\label{eq_pointWiseGradient}
 \nabla \individualAtomEnergy(\Dict^{j}) = 2
\begin{bmatrix}
\lambda_1 & \cdots &\lambda_m
\end{bmatrix}^T
\big(\sum\limits_{i=1}^{m} \lambda_i (a_i^{\AssI(j)} -
x^{\psi(j)})^T\big).
\end{eqnarray}
}

\julien{Now that the gradient of the pointwise atom energy is available
(\autoref{eq_pointWiseGradient}), we can proceed to a step of gradient descent.
Specifically, the matrix of atom points matched to $y^j(\Dict)$ at the iteration
$t+1$ (noted $\Dict^j_{t+1}$) is obtained by a step $\rho_\Dict$ from the same
matrix at the iteration $t$ (noted $\Dict^j_{t}$) along the gradient:
\begin{equation}
\label{eq_atomUpdate}
\Dict_{t+1}^j = \Pi_\mathcal{X} \big( \Dict_t^j - \rho_{\Dict} \nabla
\individualAtomEnergy(\Dict_t^{j}) \big),
\end{equation}
where $\Pi_\mathcal{X}$ projects
each atom point
to an admissible region of the 2D birth/death space (i.e. above the
diagonal and within the global scalar field range). \keanu{Since
\julienQuestion{$\keanu{\nabla}\individualAtomEnergy$} is $L$-\julien{Lipschitz}
(see the
\julien{computation details}
in
\julienQuestion{Appendix B}),
\julien{a gradient step will guarantee an energy decrease as long as:}
$\rho_\Dict < (4m)^{-1} < L^{-1}$.}
}

\julienRevision{Note that, in order to control the final size $S_m$ of the
dictionary $\Dict$, after each iteration of atom optimization, each atom $a_i$
is thresholded by removing
its
$\overline{K} = (m K - S_m) / m$ least persistent points (at the subsequent optimization
iteration, all diagrams will be re-augmented again in a pre-preprocess, as
detailed in \autoref{subsect:weight}).}
%

\julien{Overall,  for a given input diagram $X$,  each iteration $t$ of gradient
descent for the optimization of $\atomEnergy$ consists in the following steps:
\begin{enumerate}[leftmargin=0.75cm]
 \item Computing the Wasserstein barycenter $\BaryDictO$
(\autoref{sec:bary});
 \item Computing the Wasserstein distance $\War\big(\BaryDictO ,X\big)$
(\autoref{eq_weightEnergy});
  \item For each point $y^j(\Dict)$ of $\BaryDict$:
    \begin{enumerate}
      \item Estimating the gradient $\nabla \individualAtomEnergy(\Dict^{j})$
(\autoref{eq_pointWiseGradient});
      \item Applying one step $\rho_\Dict$ of gradient descent on $\Dict^{j}$
(\autoref{eq_atomUpdate});
    \end{enumerate}
  \item \julienRevision{Remove the
  $\overline{K}$
  least persistent points from each atom $a_i$.}
\end{enumerate}
}

\section{Algorithm}
\label{sec_algorithm}

\julien{This section presents our overall algorithm for the resolution of the
optimization
formulated in \autoref{sec:dictEnc}.
\autoref{section:init} details our initialization strategy.
Our overall
multi-scale
scheme is presented in \autoref{section:Prog}. Finally,
shared-memory parallelism is discussed in \autoref{section:Para}.}

%
%

\begin{figure}[tb]
 \centering 
 \includegraphics[width=\columnwidth]{initBorder.jpg}
 \caption{\julien{Illustration of our initialization strategy on a toy 2D
point set (top left). First, the entries of the distance matrix of the input
(middle) are summed on a per-line basis. The line maximizing this sum (cyan),
noted $l_1$, identifies the
first atom, noted $a_1$, as the point
which is the \emph{furthest away} from
all the others (cyan sphere, top right).
Next, the atom $a_2$ (grey sphere, top right) is selected as the point which
maximizes its distance to $a_1$. At this point, the line $l_2$
(grey, corresponding to the point $a_2$) is added to the line $l_1$, to encode
the distances to these two
atoms
($a_1$ and $a_2$). Then, the point
$a_3$ is selected as the maximizer of $l_1 + l_2$: it is the point which is the
furthest away from all the previously selected
atoms.
Then, the
corresponding line, $l_3$, is added to $l_1 + l_2$ and the process is iterated
until the target number of
atoms
has been achieved.}}
 \label{fig:init}
\end{figure}

\subsection{Initialization} \label{section:init}

\julien{Our strategy for the initialization of the Wasserstein dictionary
$\Dict$, illustrated in \autoref{fig:init}, is inspired by the celebrated
\emph{k-means++} strategy \cite{celebi13}.
Specifically, we iteratively select the $m$ atoms among the $N$ input diagrams.
At the first iteration, we select as first atom the diagram which maximizes the
sum of its Wasserstein distances (\autoref{wassDist}) to all the input diagrams
(cyan point in \autoref{fig:init}). Next, each iteration selects as the next
atom the diagram which maximizes the sum of its Wasserstein distances to all
the previously selected atoms. This process stops when the desired number of
atoms, $m$, has been selected. As illustrated in \autoref{fig:init} in the case
of a toy 2D point set, this initialization strategy has the nice property that
it tends to select atoms on the convex hull of the input point set,
which ensures that the non-atom points can indeed be expressed as
a convex combination of the atoms, hence leading to accurate initial
barycentric approximations.} \julien{As for the barycentric weights,
these are uniformly initialized (i.e. to $1/m$).}



\subsection{Multi-scale optimization algorithm}
\label{section:Prog}

\julien{In real-life data, persistence diagrams tend to contain many
low-persistence features, which essentially encode the noise in the data (see
\autoref{fig:gaussianPDStability}, right). In this section, we present a
multi-scale optimization strategy which
addresses
this issue by
prioritizing the optimization on the most persistent pairs, which
correspond to the most salient features of the data. As detailed in
\autoref{section:frameQuality}, this strategy leads the optimization to
solutions
of improved energy in comparison to a naive (non-multi-scale) approach.}

Our multi-scale strategy consists in iterating our optimization
procedure by progressively increasing the \emph{resolution} (in terms of
persistence) of the input
diagrams. This is inspired by the
progressive strategy by Vidal et al. \cite{vidal_vis19} for the problem of
Wasserstein barycenter optimization. Specifically, given an input diagram $X$,
let $\Delta f$ be the span in scalar values in the \julienRevision{corresponding
ensemble member}
$f$ (i.e. $\Delta f = \max_{v \in \mathcal{M}} f(v) - \min_{v \in\mathcal{M}}
f(v)$).
Given a threshold $\tau \in [0, 1]$, we note $X^{\tau} = \{ x \in X |\  d_x -
b_x \geq \tau \Delta f\}$ the version of $X$ at resolution $\tau$. It is a
subset of $X$ which contains persistence pairs whose relative persistence is
above $\tau$.
\julienRevision{Note that the input diagrams are not normalized by persistence,
which would prevent the capture of variability in data ranges within the ensemble.
Instead, we normalize the above persistence threshold, by expressing it as
a fraction $\tau \in [0, 1]$ of the scalar field range $\Delta f$.}

\begin{figure}[]
 \centering 
 \includegraphics[width=\columnwidth]{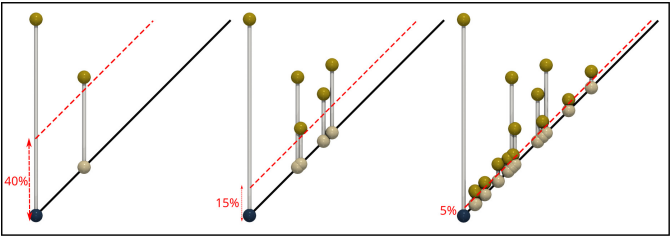}
 \caption{\julien{Multi-resolution representation of an input persistence
diagram (taken from the \emph{Isabel} ensemble). At a given resolution (from
left to right), only the persistence pairs above a given persistence threshold
(red dash line) are considered in the optimization.}}
 \label{fig:prog}
\end{figure}

Then, our multi-scale optimization will first consider the input
diagrams at a resolution $\tau_0$ and then will
progressively consider finer resolutions $\tau_1, \dots, \tau_r$ until the full
diagrams are considered at $\tau_r = 0$.
\julienRevision{This multi-resolution strategy, based on a per-diagram normalized persistence threshold
($\tau \in [0, 1]$) prevents diagrams from being empty in the early resolutions
in case of large variations in data range within the ensemble (which would
occur for instance with a per-ensemble normalization).}
\julienRevision{The multi-resolution}
is illustrated in \autoref{fig:prog}.
In our experiments, we set $\tau_0 =
0.2$ and decrease $\tau$ by $0.05$ at each resolution (i.e. $\tau_1 = 0.15,
\tau_2 = 0.10, \tau_3 = 0.05, \tau_4 = 0$). At each resolution, the solution
for the previous resolution is used as an initialization.
\julienRevision{Note that alternative strategies were considered for decreasing
$\tau$ (for instance by dividing it by $2$ at each resolution), but the best
experimental results were obtained for the above decrease strategy.}

\julien{\autoref{alg:PDEnc} summarizes our overall approach. For each
sub-optimization (i.e. weight and atom optimization), although each gradient
step is guaranteed to decrease the corresponding energy (see the end of Secs.
\autoref{subsect:weight} and \autoref{subsect:atom}), this is only true for
fixed assignments (between a diagram $X$ and its barycentric approximation as
well as between the barycentric approximation and the atoms). Since
the assignments can change along the iterations of the optimization,
the overall energy
$\dictionaryEnergy$ (\autoref{eq_dictionaryEnergy}) may increase between
consecutive iterations. Hence, pragmatic stopping conditions need to be
considered. In practice, if $\dictionaryEnergy$ has not decreased for more than
\keanu{10} iterations, we return the solutions $\Lamb^*$ and
$\Dict_{*}$ reached by the optimization with the lowest energy
$\dictionaryEnergy$.}

\begin{algorithm}[t]
\SetAlgoLined
\SetAlCapFnt{\scriptsize}
\SetAlCapNameFnt{\scriptsize}
\scriptsize
\KwIn{ Set of persistence diagrams $\{X_1, \hdots , X_N\}$;}
\textbf{Output 1:} Wasserstein Dictionary $\Dict_*$\;
\textbf{Output 2:} Barycentric weights $\Lamb_1^*, \hdots , \Lamb_N^*$\;
\For{$\tau \in \{\tau_0, \hdots, \tau_{r}\}$}{
	\If{$\tau == \tau_0$}{
		Initialization (\autoref{section:init})\;
	}
	\While{$\dictionaryEnergy$ (\autoref{eq_dictionaryEnergy}) decreases}{
		\For{\julienRevision{$n \in \{1, \hdots, N\}$}}{
			Perform a gradient step $\rho_{\Lamb\julienRevision{_n}}$ along $\nabla \weightEnergy$ \julienRevision{relative to $X_n$}
			(\autoref{subsect:weight})\;
		}
		\For{\julienRevision{$n \in \{1, \hdots, N\}$}}{
			Perform a gradient step $\rho_{\Dict}$ along $\nabla \atomEnergy$ \julienRevision{relative to $X_n$}
			(\autoref{subsect:atom})\;
		}
%
%
%
	 }
}
 \caption{Multi-scale Wasserstein Dictionary Optimization.}
 \label{alg:PDEnc}
\end{algorithm}

\subsection{Parallelism} \label{section:Para}

\julien{Our approach can be trivially parallelized with shared-memory
parallelism. First, its most computationally demanding task,
the $N$ barycentric approximations of the input diagrams can be computed
independently.
Thus, for each barycentric
approximation, we use one parallel task per input diagram.
Next, the estimation of the
gradient of $\weightEnergy$ (\autoref{subsect:weight}) is done on a per input
diagram basis, independently. Thus, we use one parallel task per input diagram.
Regarding the estimation of the gradient of $\atomEnergy$
(\autoref{subsect:atom}), given a barycentric approximation $\BaryDict$ of an
input diagram $X$, each of its points $y^j(\Dict)$ defines independently a
pointwise version of the gradient of the atom energy (see the last paragraph
of \autoref{subsect:atom}). Thus, we use one parallel task per point
$y^j(\Dict)$ of a barycentric approximation $\BaryDict$ of an input diagram
$X$.}


\section{Applications}
\label{sec_applications}

\julien{This section illustrates the utility of our approach in concrete
visualization tasks: data reduction and dimensionality reduction.}


\begin{figure*}[tb]
 \centering 
 \includegraphics[width=\linewidth]{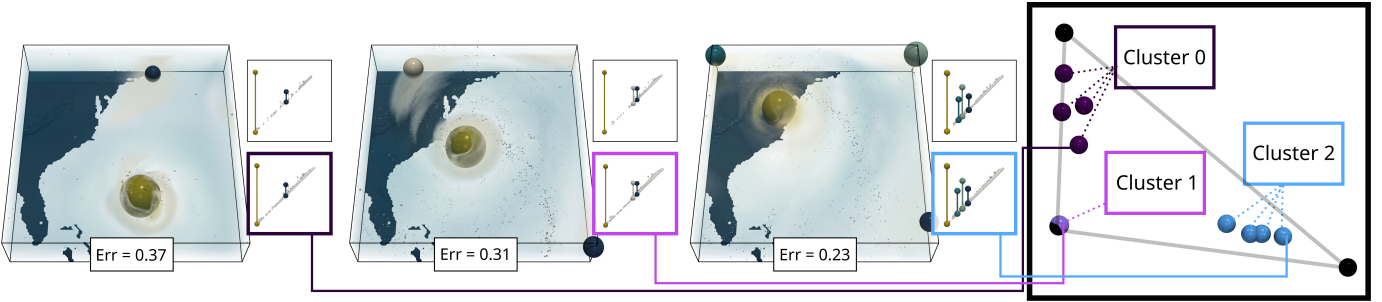}
 \caption{\julien{Visual comparison (left) between the input persistence
diagrams (top insets, saddle-maximum persistence pairs only) and our compressed
diagrams (bottom insets, \autoref{sec:
dataRed}, saddle-maximum persistence pairs only) for three members of the
\emph{Isabel} ensemble (one member per
ground-truth class). For each member, the sphere color encodes
the matching between
the input and the compressed diagrams (for
the meaningful persistence pairs, above \julienQuestion{$10\%$} of the
function range). This visual comparison shows that the main features of the
diagrams (encoding the main hurricane wind gusts in the data) are well
preserved by the data reduction, especially for the members coming from the
cluster $2$, for which a lower \julienQuestion{relative} reconstruction
error (\emph{Err}) can be observed. The planar overview of the ensemble (right)
generated by our dimensionality reduction
(\autoref{sec_dimensionalityReduction}) enables
the
visualization of the relations between the different diagrams of the ensemble.
Specifically, this illustration shows a larger disparity for two clusters.
}}
 \label{fig:isabVis}
\end{figure*}

\begin{figure*}[tb]
 \centering
 \includegraphics[width=\linewidth]{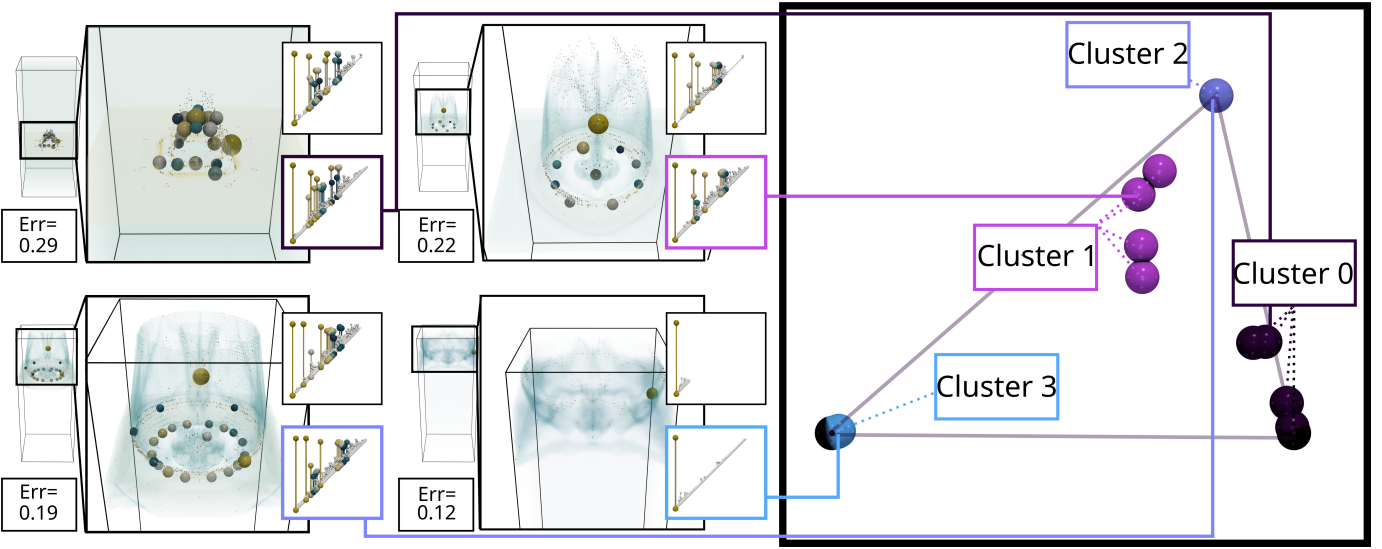}
 \caption{\julien{Visual comparison (left) between the input persistence
diagrams (top insets) and our compressed diagrams (bottom insets, \autoref{sec:
dataRed}) for four members of the \emph{Ionization front (3D)} ensemble (one
member per
ground-truth class). The color encoding is the same as in \autoref{fig:isabVis}.
This visual comparison shows that the main features of the
diagrams (the extremities of the ionization front)
are well
preserved by the data reduction, especially for the members coming from the
clusters $2$ and $3$, for which a lower
reconstruction
error (\emph{Err}) can be observed. The planar overview of the ensemble (right)
generated by our dimensionality reduction
(\autoref{sec_dimensionalityReduction}) enables
the
visualization of the relations between the
diagrams of the ensemble.
Specifically,
it shows a larger disparity for the clusters $0$
and $1$
(spread out purple and pink spheres), which are also the most difficult to
reconstruct.}}
 \label{fig:ion3DVis}
\end{figure*}


\subsection{Data reduction} \label{sec: dataRed}
\julien{Like any data representation, persistence diagrams can benefit from
lossy compression. This can be useful in in-situ \cite{AyachitBGOMFM15}
use-cases, where time-steps are represented on permanent storage with
topological signatures \cite{BrownNPGMZFVGBT21}. In such scenarios, lossy
compression is useful to facilitate the manipulation (i.e. storage and
transfer) of the resulting ensemble of persistence diagrams.}
We present now an
application to data reduction where the input ensemble of persistence diagrams 
is compressed, by only storing to disk:

\noindent
\textit{(i)} the \julien{Wasserstein} dictionary of
persistence diagrams $\Dict_{*}$ and

\noindent
\textit{(ii)} the $N$ barycentric weights $\julienRevision{\Lamb_1^*},
\hdots, \julienRevision{\Lamb_N^*}$.

The compression
\julien{quality can be controlled with two
input parameters \textit{(i)} the number of atoms $m$ and \textit{(ii)} the
maximum $S_m$ of the total size of the atoms (i.e. $\sum_{i = 1}^m
|a_i|$).} 
\julien{The reconstruction error (given
by the energy $\dictionaryEnergy$, \autoref{eq_dictionaryEnergy}.) will be
minimized for large values of both parameters, while the compression factor
will be maximized for low values. In our data reduction experiments, we set the
number of atoms $m$ to the number of ground-truth classes of each ensemble, as
documented  in the ensemble descriptions \cite{pont_vis21}.
Moreover, we set $S_m$ to $c_f^{-1} \sum_{i = 1}^N |X_n|$, where $c_f$ is a
target compression factor
\julienRevision{and $|X_n|$ is the number of non-diagonal points in the input
diagram $X_n$}
(see \autoref{section:frameQuality} for a
quantitative evaluation).}

\julien{\autoref{fig:isabVis} (left) provides a visual comparison between the
diagram compressed with this strategy (bottom insets) and the original diagram
(top insets), for three members of the \emph{Isabel} ensemble. This experiment
shows that diagrams can be significantly compressed ($c_f = 5.49$), while
still faithfully encoding the main features of the data. \autoref{fig:ion3DVis}
(left) provides a similar visual comparison for the \emph{Ionization front
(3D)} ensemble ($c_f = 2.9$).}

\julien{We have applied our data reduction approach to topological clustering
\cite{vidal_vis19}, where the main trends within the ensemble are identified by
clustering the ensemble members based on their persistence diagrams.
For the large majority of our test ensembles,
the outcome of the clustering algorithm \cite{vidal_vis19} was
identical when used with the input diagrams or our compressed diagrams
(\autoref{section:limits} documents a counter-example).
This confirms the viability and utility
of our data reduction scheme.}


\subsection{Dimensionality reduction}
\label{sec_dimensionalityReduction}

\julien{Our framework can also be used to generate low-dimensional layouts of
the ensemble, for its global visual inspection. Specifically, we generate 2D
planar layouts by using $m = 3$ atoms and by embedding our Wasserstein
dictionary \julienRevision{$\Dict_*$} as a triangle in the plane, such that its
edge lengths are
equal to the Wasserstein distances between the corresponding atoms. Next, each
diagram $X$ of the input ensemble is embedded as a point in
this triangle by
using its barycentric weights \julienRevision{$\Lamb^*$} as barycentric
coordinates.}

\julien{As illustrated in Figs. \ref{fig:isabVis} (right) and
\ref{fig:ion3DVis} (right), our dimensionality reduction provides a planar
overview of the ensemble which groups together diagrams which are close
in terms of Wasserstein distances. Specifically, in both examples, the
ground-truth classification of the ensemble is visually respected: the points
of a given class (same color) indeed form a distinct cluster in the planar
view.}

\section{Results}
\label{sec_results}

This section presents experimental results obtained on a computer with two Xeon
CPUs (3.2 GHz, 2x10 cores, 96GB of RAM). The input persistence diagrams were
computed with \julien{the \emph{Discrete Morse Sandwich} algorithm}
\cite{guillou2023discrete}.
We implemented our approach in C++ (with OpenMP), as
modules for TTK \cite{ttk17}, \cite{ttk19}. Experiments were ran on the
benchmark of public ensembles \cite{ensembleBenchmark} described in
\cite{pont_vis21}, which includes
simulated and acquired
2D and 3D ensembles from previous work and past SciVis contests
\cite{scivisOverall}.
\julienRevision{The considered type of persistence pairs (i.e. the index of the
corresponding critical points, \autoref{section:distDgm}) was selected on
a per-ensemble basis, depending on
the features of interest present in the ensemble. All types of
pairs (i.e. minimum-saddle pairs, saddle-saddle pairs and saddle-maximum pairs)
were considered for the following ensembles:
\emph{Cloud processes}, 
\emph{Isabel},
\emph{Starting Vortex}, 
\emph{Sea Surface Height},  
\emph{Vortex Street}.  
Only the persistence pairs including extrema were considered for the
ensembles
\emph{Ionization front (2D)} and
\emph{Ionization front (3D)}.
Finally, only the persistence pairs containing maxima were considered for the
remaining ensembles:
\emph{Asteroid Impact},
\emph{Dark Matter},
\emph{Earthquake},
\emph{Viscous Fingering},
\emph{Volcanic Eruptions}.
}

\subsection{Time performance}
\label{sec_timings}

\julien{The most computationally expensive part of our approach is the
computation of the $N$ Wasserstein barycenters, for which we use the algorithm
by Vidal et al. \cite{vidal_vis19}.
Each iteration of barycenter optimization
approximatively requires  $\mathcal{O}(mK^2)$ steps
in practice (where $K$ is the size of the augmented diagrams, cf.
\autoref{sec_metricSpace}). As discussed in \autoref{section:Para},
each barycenter is computed in parallel.
The evaluations of the gradient of the weight energy (\autoref{subsect:weight})
and the atom energy (\autoref{subsect:atom}) both require $\mathcal{O}(NmK)$
steps. As described in Secs. \ref{subsect:weight}
and \ref{subsect:atom}, both evaluations can be run in parallel.}


\julien{\autoref{Tab:SpeedUp} evaluates the practical time performance of our
multi-scale algorithm for the optimization of the Wasserstein dictionary. In
sequential, the runtime is roughly a function of the number of input diagrams
($N$) as well as their average size ($|X|$).
The parallelization of our algorithm
(with $20$ cores) induces a significant speedup (up to 18 for the largest
ensembles), resulting in an average computation time below $5$ minutes, which
we consider to be an acceptable pre-processing time, prior to interactive
exploration. In comparison to the principal geodesic analysis of persistence
diagrams (Tab. 1 of \cite{pont2022principal}),
on a per ensemble basis, our approach is $1.56$ times faster on average (on the
same hardware).}

\begin{table}
\caption{\julien{Running times (in seconds) of our multi-scale algorithm ($1$
and $20$ cores).}}
\resizebox{\linewidth}{!}{
\begin{tabular}{ |p{3cm}||r|r||r|r||r|  }
 \hline
 Dataset& N&$|X|$ & 1 core & 20 cores & Speedup\\
 \hline
 Asteroid Impact (3D)& 20 & 220
  & 259 & 35
 &  7.50\\
 Dark matter (3D) & 40 & 216
& 1,323  & 188
 & 7.04\\
 Earthquake (3D)& 12 & 97
& 113 & 92
 &1.23\\
 Ionization front (3D)& 16 & 757
& 4,230 & 595
 & 7.11\\
 Isabel (3D) & 12 & 1,310
& 1,609 & 270
 & 5.96\\
 Viscous Fingering (3D) & 15 & 158
& 252 & 49
 &  5.14\\
 Cloud processes (2D) & 12 & 1,176
& 914 & 64
 & 14.28\\
 Ionization front (2D)& 16  & 186
& 145 & 45
 &\keanu{3.22}\\
 Sea surface height (2D) & 48 & 1,567
& 14,587 & 792
 & 18.42\\
 Starting vortex (2D)& 12 & 125
& 140 & 24
 & 5.83\\
 Vortex street (2D) & 45 & 43
& 1,061 & 241
 &4.40\\
 Volcanic eruptions (2D) & 12  & 860
& 2,798 & 706
 &3.96\\
 \hline
 
\end{tabular}}
\label{Tab:SpeedUp}
\end{table}

\subsection{Framework quality} \label{section:frameQuality}

\begin{table}
\caption{
\julien{Comparison of the average relative reconstruction error (for a common
target compression factor), between
a naive optimization (\autoref{sec:dictEnc}) and
our multi-scale strategy (\autoref{section:Prog}).
Our multi-scale algorithm
improves the
error by $30\%$ on
average
over the naive approach.}}
\resizebox{\linewidth}{!}{
\begin{tabular}{ |p{3cm}||r|r||r|r||r||r||r|}
 \hline
 \textbf{Dataset} & N & $|X|$ & m & $|a|$
 & Factor
 & \multicolumn{1}{c||}{Error (Naive)} &
\multicolumn{1}{c|}{Error (Multi-Scale)}  \\
\hline
 Asteroid Impact (3D)& 20 & 220 & 4 & 493 & 2.20
 & 0.09
 & 0.06 \\
 Dark matter (3D) & 40 & 216 & 4 & 215 & 10.87
 & 0.15
 & 0.12 \\
 Earthquake (3D)& 12 &98  & 3 & 120 & 3.05
 & 0.16
 & 0.04 \\
 Ionization front (3D)& 16 & 757 & 4 & 1,044 & 2.90
 & 0.29
 & 0.20 \\
 Isabel (3D) & 12 & 1,310 & 3 & 1,049 & 5.49
 & 0.34
 & 0.37 \\
 Viscous Fingering (3D) & 15 & 158 & 3 & 41 & 2.78
 & 0.15
 & 0.11 \\
 Cloud processes (2D) & 12 & 1,176& 3 & 381 & 5.97
 & 0.38
 &0.41 \\
 Ionization front (2D)& 16 & 186 &  4 & 300 & 2.68
 & 0.38
 & 0.17 \\
 Sea surface height (2D) & 48& 1,567 & 4& 534 & 20.98
 & 0.54
 & 0.61 \\
 Starting vortex (2D)& 12 & 125 & 2 & 379 & 1.98
 & 0.22
 & 0.09 \\
 Vortex street (2D) & 45 & 43 & 5 & 75 & 5.08
 & 0.18
 & 0.04 \\
 Volcanic eruptions (2D) & 12&860 & 3 & 345 & 9.97
 & 0.20
 & 0.20\\
 \hline
\end{tabular}}
\label{Tab:Compress}
\end{table}

\begin{figure}
 \centering 
 \includegraphics[width=\linewidth]{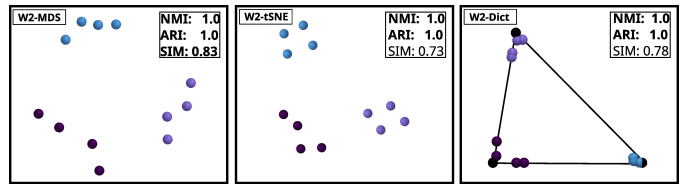}
 \caption{\julien{Comparison between the 2D layouts obtained with our
approach (\emph{W2-Dict}) and these obtained with typical dimensionality
reduction approaches (\emph{W2-MDS} \cite{kruskal78}, \emph{W2-tSNE}
\cite{tSNE}) on the \emph{Isabel} ensemble (all persistence pairs are
considered). Here, the three approaches preserve well the clusters
of the ensemble (NMI/ARI). As expected, \emph{W2-MDS} provides (by design) the
best metric preservation (SIM, bold). Our approach constitutes a trade-off
between \emph{W2-MDS} and \emph{W2-tSNE}.}}
 \label{fig:isabDim}
\end{figure}

\begin{figure*}
 \centering 
 \includegraphics[width=\linewidth]{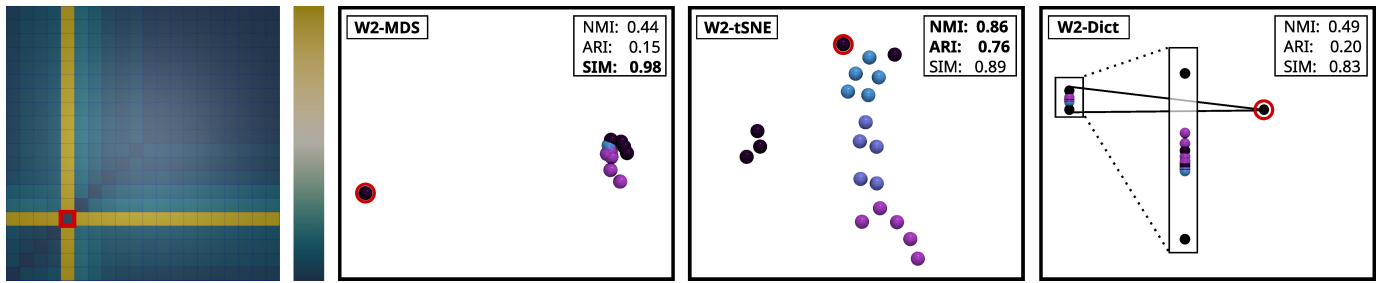}
 \caption{\julien{Comparison between the 2D layouts obtained with our approach
(\emph{W2-Dict}) and these obtained with typical dimensionality reduction
approaches (\emph{W2-MDS} \cite{kruskal78}, \emph{W2-tSNE} \cite{tSNE}) on a
\emph{challenging} ensemble. In this example (\emph{Asteroid Impact}), the
presence of an outlier (time step of the actual impact, red entry in the
distance matrix, left) challenges
cluster preservation. While \emph{W2-tSNE} provides the best cluster
preservation scores (NMI/ARI), it fails at visually depicting the outlier (red
circle) as being far away from the other ensemble members. In contrast,
\emph{W2-MDS} and \emph{W2-Dict} do a better job at isolating this outlier (red
circle), with \emph{W2-Dict} providing slightly improved cluster preservation
scores (NMI/ARI).}}
 \label{fig:asteroidDim}
\end{figure*}

\begin{table*}
\caption{\julienRevision{Detailed layout quality scores
(i.e. bold: best values).
On average (bottom row), our approach
(\emph{W2-Dict}) provides a trade-off between \emph{W2-MDS} and
\emph{W2-tSNE}: it preserves the clusters (NMI/ARI) slightly better than \emph{W2-MDS}
and 
the metric (SIM) clearly better than \emph{W2-tSNE}.}}
\centering
\begin{tabular}{|p{3cm}||r|r|r||r|r|r||r|r|r|}
\hline
  & \multicolumn{3}{c||}{NMI} & \multicolumn{3}{c||}{ARI} & \multicolumn{3}{c|}{SIM} \\
\hline
Dataset & W2-MDS & W2-tSNE & W2-Dict & W2-MDS & W2-tSNE & W2-Dict & W2-MDS & W2-tSNE & W2-Dict \\
\hline
Asteroid Impact (3D) & 0.44 & \textbf{0.86} & 0.49 & 0.15 & \textbf{0.76} & 0.20 & \textbf{0.91} & 0.89 & 0.83 \\
Dark Matter (3D) & \textbf{1.00} & \textbf{1.00} & \textbf{1.00} & \textbf{1.00} & \textbf{1.00} & \textbf{1.00} & \textbf{0.91} & 0.68 & 0.84 \\
Earthquake (3D) & \textbf{0.65} & 0.61 & \textbf{0.65} & 0.37 & \textbf{0.44} & 0.37 & \textbf{0.96} & 0.72 & 0.91 \\
Ionization Front (3D) & \textbf{1.00} & \textbf{1.00} & \textbf{1.00} & \textbf{1.00} & \textbf{1.00} & \textbf{1.00} & \textbf{0.86} & 0.71 & 0.71 \\
Isabel (3D) & \textbf{1.00} & \textbf{1.00} & \textbf{1.00} & \textbf{1.00} & \textbf{1.00} & \textbf{1.00} & \textbf{0.83} & 0.73 & 0.78 \\
Viscous Fingering (3D) & \textbf{1.00} & \textbf{1.00} & \textbf{1.00} & \textbf{1.00} & \textbf{1.00} & \textbf{1.00} & \textbf{0.91} & 0.64 & 0.89 \\
Cloud Processes (2D) & \textbf{1.00} & \textbf{1.00} & \textbf{1.00} & \textbf{1.00} & \textbf{1.00} & \textbf{1.00} &\textbf{ 0.79} & 0.55 & 0.68 \\
Ionization Front (2D) & \textbf{1.00} & \textbf{1.00} & \textbf{1.00} & \textbf{1.00} & \textbf{1.00} & \textbf{1.00} & 0.78 & 0.74 & \textbf{0.83} \\
Sea Surface Height (2D) & \textbf{1.00} & \textbf{1.00} & \textbf{1.00} & \textbf{1.00} & \textbf{1.00} & \textbf{1.00} & \textbf{0.85} & 0.73 & 0.79 \\
Starting Vortex (2D) & \textbf{1.00} & \textbf{1.00} & \textbf{1.00} & \textbf{1.00} & \textbf{1.00} & \textbf{1.00} & \textbf{0.88} & 0.72 & 0.84 \\
Street Vortex (2D) & \textbf{1.00} & 0.14 & \textbf{1.00} & \textbf{1.00} & -2e-4 & \textbf{1.00} & 0.89 & \textbf{0.96} & 0.81 \\
Volcanic Eruption (2D) & 0.66 & \textbf{1.00} & 0.66 & 0.41 & \textbf{1.00} & 0.41 & \textbf{0.81} & 0.74 & 0.74\\
\hline
Average &  0.896    &0.884 &  \textbf{0.900} & 0.827  & \textbf{0.849}  & 0.832 & \textbf{0.870} & 0.734&  0.804\\
\hline
\end{tabular}
\label{Tab:aggIndice}
\end{table*}

\julien{\autoref{Tab:Compress} reports compression factors and average relative
reconstruction errors for our application to data reduction (\autoref{sec:
dataRed}).
For each ensemble, the compression factor is the ratio between the storage size
of the input diagrams and that of the Wasserstein dictionary
\julienRevision{$\Dict_*$} (the $m$
atoms, of average size $|a|$, plus the $N$ sets of barycentric weights).
The relative reconstruction error is obtained by considering the
Wasserstein distance between an input diagram and its barycentric
approximation, divided by the maximum pairwise Wasserstein distance observed in
the input ensemble. Then this relative reconstruction error is averaged over
all the diagrams of the ensemble. \autoref{Tab:Compress} compares a naive
optimization
(\autoref{sec:dictEnc}) to our multi-scale strategy
(\autoref{section:Prog}). Specifically,
for a given ensemble,
the same target compression factor was
used for both approaches (by imposing the same upper boundary on the total size
of the atoms, \autoref{sec: dataRed}). \autoref{Tab:Compress} shows that our
multi-scale strategy (\autoref{section:Prog}) enables the optimization to
progress towards better solutions, as assessed by the improvement in
reconstruction error of $30\%$ on average.
In comparison to the principal geodesic analysis of persistence diagrams
(Appendix D of \cite{pont2022principal}), for the same compression factors, the
error induced by our approach is on average $1.79$ times larger. However, our
approach is simpler, more flexible (our optimization is not subject to
restrictive constraints, such as geodesic orthogonality) and slightly faster
(\autoref{sec_timings}).}


\julien{\autoref{fig:isabDim} provides a visual comparison between the 2D
layouts obtained with our approach on the \emph{Isabel} ensemble and those
obtained with two typical dimensionality reduction techniques, namely MDS
\cite{kruskal78} and
tSNE \cite{tSNE}, directly applied on the distance matrix obtained by
computing the Wasserstein distance between all the pairs of input diagrams. For
a given technique, to quantify its ability to preserve the \emph{structure} of
the ensemble, we run $k$-means in the 2D layouts and evaluate the quality of
the resulting clustering (given the ground-truth \cite{pont_vis21}) with the
normalized mutual information (NMI) and adjusted rand index (ARI). To quantify
its ability to preserve the \emph{geometry} of the ensemble, we report the
metric similarity indicator SIM \cite{pont2022principal}, which evaluates the
preservation of the Wasserstein metric in the 2D layout. All these scores vary
between $0$ and $1$, with $1$ being optimal.
In \autoref{fig:isabDim},
the three
approaches preserve well the clusters of the ensemble (NMI/ARI) and our
approach provides a trade-off between MDS and tSNE in terms of metric
preservation (SIM).
\autoref{fig:asteroidDim} provides another visual comparison on a
\emph{challenging} ensemble (\emph{Asteroid Impact}). There, the presence of an
outlier (time step of the actual impact) challenges cluster preservation.
While tSNE provides the best cluster
preservation
(NMI/ARI), it fails at visually depicting the outlier (red
circle) as being far away from the other ensemble members. In contrast,
\emph{MDS} and our approach do
isolate
this outlier (red
circle), with our approach providing slightly improved cluster preservation
 (NMI/ARI) over \emph{MDS}. This illustrates the viability of our dimensionality
reductions
for outlier detection.
\julienQuestion{Appendix C} extends this visual analysis to
all our test ensembles.}


\julien{\autoref{Tab:aggIndice} extends our quantitative analysis to all our
ensembles.
MDS preserves well the metric (high SIM), at
the expense of mixing ground-truth classes (low NMI/ARI). tSNE behaves
symmetrically (higher NMI/ARI, lower SIM).
Our approach provides a trade-off between the extreme behaviors of MDS and
tSNE, with a cluster preservation slightly improved over MDS (NMI/ARI), and a
clearly improved
metric preservation over tSNE (SIM).}


\julien{\autoref{tikz:naiveConvRate} reports the evolution of the normalized
energy $\dictionaryEnergy$ along the optimization for all test ensembles,
for the naive
optimization strategy (\autoref{sec:dictEnc}), by using a number of atoms equal
to the number of ground-truth classes (cf. our application to data reduction,
\autoref{sec: dataRed}). In this figure, the energy is normalized on a per
ensemble basis, based on its initial value.
This figure shows that the energy does
decrease for most ensembles,
but still with large oscillations due to the
non-convex nature of the dictionary energy $\dictionaryEnergy$.
In contrast, the energy evolution with our multi-scale strategy
(\autoref{tikz:progConvRate}) results in much less oscillations, which
indicates the ability of this strategy to help the optimization
explore in a more stable manner the locally convex areas of the energy
\julienRevision{(Appendix D discusses a
counter-example)}.
Specifically,
in \autoref{tikz:progConvRate}, one can observe sequences of
discontinuous decrease patterns, characterized by an abrupt drop followed by a
plateau.
Each of these patterns corresponds to one persistence scale of our multi-scale
strategy (this is particularly apparent on the \emph{Cloud Processes}
ensemble).}

\julien{\autoref{tikz:naiveVsProgConvRate} provides a closer comparison between
the two
strategies on a selection of four ensembles.
The \emph{Cloud
Processes} ensemble is an example where the naive optimization reaches a
solution of slightly lower energy. For the other ensembles, our multi-scale
strategy leads to solutions of much lower energy, visually confirming the
conclusions of \autoref{Tab:Compress}. In this figure, one can also observe
the
characteristic decrease patterns discussed above, particularly
apparent on the \emph{Ionization Front (3D)} ensemble, which correspond to the
distinct
scales of our multi-scale strategy.}

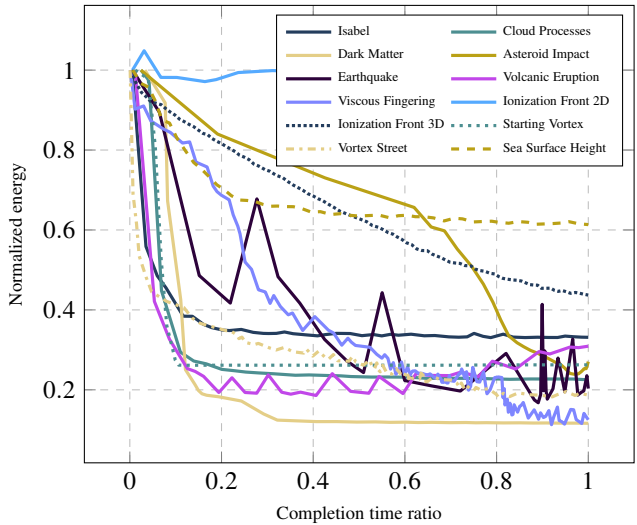
\begin{figure}
\centering
\begin{tikzpicture}
\begin{axis}[group style={group name=plots,},xlabel={Completion time ratio},
ylabel={Normalized energy}, legend cell align={left}]
\addplot[curve1] table [x=Timers, y=Loss, col sep=comma] {Result/PlotIsabLossNaiv.csv};
\addlegendentry{Isabel}
\addplot[curve2] table [x=Timers, y=Loss, col sep=comma] {Result/PlotCloudLossNaiv.csv};
\addlegendentry{Cloud Processes}
\addplot[curve3] table [x=Timers, y=Loss, col sep=comma] {Result/PlotDarkMattLossNaiv.csv};
\addlegendentry{Dark Matter}
\addplot[curve4] table [x=Timers, y=Loss, col sep=comma] {Result/PlotAsteroidLossNaiv.csv};
\addlegendentry{Asteroid Impact}
\addplot[curve5] table [x=Timers, y=Loss, col sep=comma] {Result/PlotEarthQuakeLossNaiv.csv};
\addlegendentry{Earthquake}
\addplot[curve6] table [x=Timers, y=Loss, col sep=comma] {Result/PlotVolcanoLossNaiv.csv};
\addlegendentry{Volcanic Eruption}
\addplot[curve7] table [x=Timers, y=Loss, col sep=comma] {Result/PlotViscousLossNaiv.csv};
\addlegendentry{Viscous Fingering}
\addplot[curve8] table [x=Timers, y=Loss, col sep=comma] {Result/PlotIon2DLossNaiv.csv};
\addlegendentry{Ionization Front 2D}
\addplot[curve9] table [x=Timers, y=Loss, col sep=comma] {Result/PlotIon3DLossNaiv.csv};
\addlegendentry{Ionization Front 3D}
\addplot[curve10] table [x=Timers, y=Loss, col sep=comma] {Result/PlotStartLossNaiv.csv};
\addlegendentry{Starting Vortex}
\addplot[curve11] table [x=Timers, y=Loss, col sep=comma] {Result/PlotStreetLossNaiv.csv};
\addlegendentry{Vortex Street}
\addplot[curve12] table [x=Timers, y=Loss, col sep=comma] {Result/PlotSeaSurfLossNaiv.csv};
\addlegendentry{Sea Surface Height}
\end{axis}
\end{tikzpicture}
\caption{\julien{Evolution of the (normalized) energy $\dictionaryEnergy$
along the optimization, with a naive optimization (\autoref{sec:dictEnc}),
for all our test ensembles.}}
\label{tikz:naiveConvRate}
\end{figure}

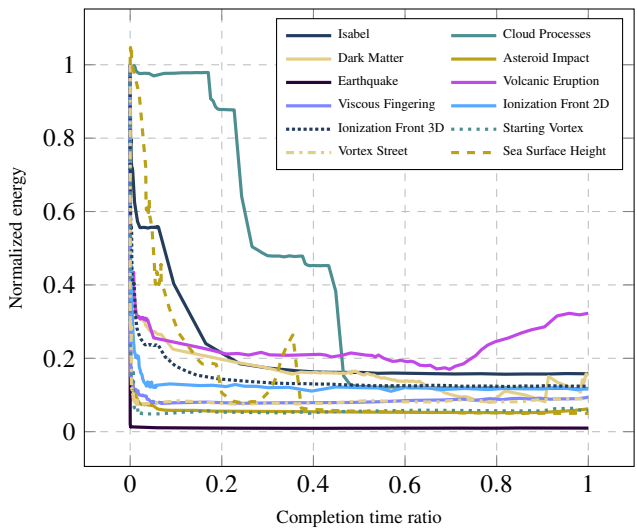
\begin{figure}
\centering
\begin{tikzpicture}
\begin{axis}[group style={group name=plots,},xlabel={Completion time ratio},
ylabel={Normalized energy}, legend cell align={left}]
\addplot[curve1] table [x=Timers, y=Loss, col sep=comma] {Result/PlotIsabLoss.csv};
\addlegendentry{Isabel}
\addplot[curve2] table [x=Timers, y=Loss, col sep=comma] {Result/PlotCloudLoss.csv};
\addlegendentry{Cloud Processes}
\addplot[curve3] table [x=Timers, y=Loss, col sep=comma] {Result/PlotDarkMattLoss.csv};
\addlegendentry{Dark Matter}
\addplot[curve4] table [x=Timers, y=Loss, col sep=comma] {Result/PlotAsteroidLoss.csv};
\addlegendentry{Asteroid Impact}
\addplot[curve5] table [x=Timers, y=Loss, col sep=comma] {Result/PlotEarthQuakeLoss.csv};
\addlegendentry{Earthquake}
\addplot[curve6] table [x=Timers, y=Loss, col sep=comma] {Result/PlotVolcanoLoss.csv};
\addlegendentry{Volcanic Eruption}
\addplot[curve7] table [x=Timers, y=Loss, col sep=comma] {Result/PlotViscousLoss.csv};
\addlegendentry{Viscous Fingering}
\addplot[curve8] table [x=Timers, y=Loss, col sep=comma] {Result/PlotIon2DLoss.csv};
\addlegendentry{Ionization Front 2D}
\addplot[curve9] table [x=Timers, y=Loss, col sep=comma] {Result/PlotIon3DLoss.csv};
\addlegendentry{Ionization Front 3D}
\addplot[curve10] table [x=Timers, y=Loss, col sep=comma] {Result/PlotStartLoss.csv};
\addlegendentry{Starting Vortex}
\addplot[curve11] table [x=Timers, y=Loss, col sep=comma] {Result/PlotStreetLoss.csv};
\addlegendentry{Vortex Street}
\addplot[curve12] table [x=Timers, y=Loss, col sep=comma] {Result/PlotSeaSurfLoss.csv};
\addlegendentry{Sea Surface Height}

\end{axis}
\end{tikzpicture}
\caption{\julien{Evolution of the (normalized) energy $\dictionaryEnergy$
along the optimization, with our multi-scale strategy (\autoref{section:Prog}),
for all our test ensembles.}}
\label{tikz:progConvRate}
\end{figure}

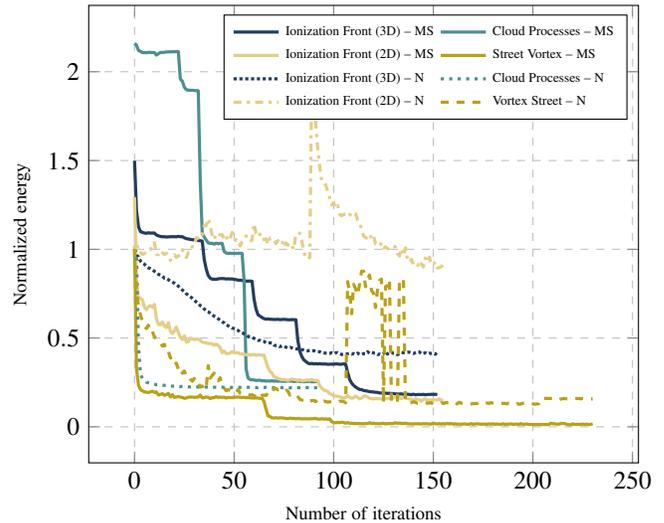
\begin{figure}
\centering
\begin{tikzpicture}
\begin{axis}[group style={group name=plots,},xlabel={Number of iterations},
ylabel={Normalized energy}, legend cell align={left}]
\addplot[curve1] table [x=Iterations, y=Loss, col sep=comma] {Result/PlotIon3DLossProg.csv};
\addlegendentry{Ionization Front (3D) -- MS}
\addplot[curve2] table [x=Iterations, y=Loss, col sep=comma] {Result/PlotCloudLossProg.csv};
\addlegendentry{Cloud Processes -- MS}
\addplot[curve3] table [x=Iterations, y=Loss, col sep=comma] {Result/PlotIon2DLossProg.csv};
\addlegendentry{Ionization Front (2D) -- MS}
\addplot[curve4] table [x=Iterations, y=Loss, col sep=comma] {Result/PlotStreetLossProg.csv};
\addlegendentry{Street Vortex -- MS}
\addplot[curve9] table [x=Iterations, y=Loss, col sep=comma] {Result/PlotIon3DLossNaiv2.csv};
\addlegendentry{Ionization Front (3D) -- N}
\addplot[curve10] table [x=Iterations, y=Loss, col sep=comma] {Result/PlotCloudLossNaiv2.csv};
\addlegendentry{Cloud Processes -- N}
\addplot[curve11] table [x=Iterations, y=Loss, col sep=comma] {Result/PlotIon2DLossNaiv2.csv};
\addlegendentry{Ionization Front (2D) -- N}
\addplot[curve12] table [x=Iterations, y=Loss, col sep=comma] {Result/PlotStreetLossNaiv2.csv};
\addlegendentry{Vortex Street -- N}
\end{axis}
\end{tikzpicture}
\caption{
\julien{Comparison of the evolutions of the (normalized) energy
$\dictionaryEnergy$ between the naive optimization
(\autoref{sec:dictEnc}, \emph{N}, dashed curves) and our multi-scale strategy
(\autoref{section:Prog}, \emph{MS}, solid curves) for four ensembles. For this
experiment, the energy has been normalized with regard to the initial energy
of the naive optimization.
The \emph{Cloud
Processes} ensemble is an example where the naive optimization reaches a
solution of slightly lower energy. For the other three ensembles, our
multi-scale strategy leads to solutions of much lower energy, through a sequence
of
characteristic,
discontinuous decrease patterns
(abrupt drop followed by a plateau) corresponding to the five persistence
scales of our multi-scale strategy.}}
\label{tikz:naiveVsProgConvRate}
\end{figure}

\begin{figure}[tb]
 \centering 
 \includegraphics[width=\columnwidth]{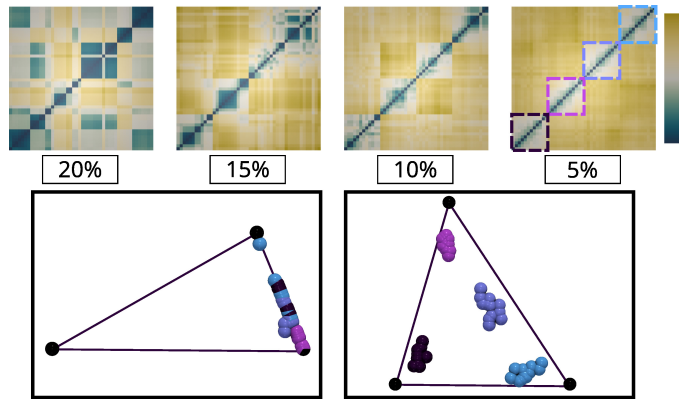}
 \caption{\julien{Counter-example for our multi-scale strategy (\emph{Sea
Surface Height} ensemble). Top: Wasserstein distance matrices for the first
four persistence scales of our multi-scale strategy. The ground-truth classes
only start to become visible in the distance matrix between the third and
fourth scale (dashed sub-matrices in the fourth scale). As a result, our
multi-scale strategy is attracted in the first scales towards a local minimum
of the energy which does not encode well the ground-truth classes
(dimensionality reduction, bottom left). In contrast, the naive optimization
 manages to reach a solution
which separates well the ground-truth classes (dimensionality reduction, bottom
right).}}
 \label{fig:progSee}
\end{figure}

\subsection{Limitations} \label{section:limits}
\julien{Similarly to other optimization problems based on topological
descriptors \cite{Turner2014, pont_vis21, pont2022principal, vidal_vis19}, our
energy is not convex.
Additionally, as shown in \autoref{tikz:naiveConvRate},
the interleaving of the weight optimization (\autoref{subsect:weight}) with
atom optimization (\autoref{subsect:atom}) can even lead to oscillations in the
energy. As discussed in \autoref{section:frameQuality}, our
multi-scale strategy (\autoref{section:Prog}) greatly mitigates both issues,
with a more stable optimization than a naive approach (\autoref{sec:dictEnc}),
which leads to relevant solutions which are exploitable in the applications
(\autoref{sec_applications}). However, we have found
one example in our test ensembles (the \emph{Sea Surface Height} ensemble),
where our
multi-scale strategy reached solutions which were arguably worse than these
obtained with a naive solution, as described in details in
\autoref{fig:progSee}.
In
this example, the most persistent features in the diagrams are not particularly
discriminative for the separation of the ground-truth classes. On the contrary,
the variations between these classes seem mostly encoded by the
\emph{low} persistence features: in \autoref{fig:progSee} clear separations in
the distance matrices between the ground-truth classes only start to occur in
the latest persistence scales (dashed sub-matrices, top right inset). This
counter-intuitive observation goes against the rule of thumb traditionally used
in topological data analysis, which states that the most persistent pairs encode
the most important features in the data. For this example, when applying our
framework to dimensionality reduction, the non-discriminative aspect of the
early persistence scales eventually lead our multi-scale strategy towards a
local minium which does not separate the ground-truth classes well (planar
layout, bottom left) in comparison to the naive strategy (planar layout, bottom
right).
Thus,
for this ensemble, we reported dimensionality reduction results
(\autoref{Tab:aggIndice}, \julienQuestion{Appendix C}) obtained with the naive
optimization.
In general,
this means that when users are confronted
with ensembles where the most persistent pairs are not the most responsible for
data variability (hence class separation), the naive optimization may need
to be considered additionally as it might provide
solutions
which better
encode the ground-truth classes.}

\julienRevision{Finally, as detailed in Appendix D, the presence of clear
outliers
can also challenge our optimization, especially
when the selected number of atoms
equals the number of ground-truth classes. Then, in this case, the best
dictionary encoding will consequently be obtained by increasing the number of
atoms, specifically, by considering that each outlier forms a singleton class.}
%

%

\section{Conclusion}

\julien{In this paper, we presented an approach for the encoding of linear
relations between persistence diagrams, given the Wasserstein metric.
Specifically, we introduced
a dictionary based representation of an ensemble of persistence diagrams,
inspired by previous work on histograms \cite{schmitz2018wasserstein}. We
first documented a naive optimization,
which interleaves
the optimization of the
barycentric weights of the input diagrams with the optimization of the atoms of
the dictionary (\autoref{sec:dictEnc}). Then, we presented a multi-scale
strategy (\autoref{section:Prog}) leading to more stable optimizations and
relevant solutions (\autoref{section:frameQuality}). We demonstrated the utility
of our contributions in applications (\autoref{sec_applications}) to data
reduction and dimensionality reduction, where the visualizations generated by
our framework enable the visual identification of the main trends in the
ensembles (Figs. \ref{fig:isabVis}, \ref{fig:ion3DVis}), and the
quick identification of outliers (\autoref{fig:asteroidDim}). In contrast to
previous work
on persistence diagram encoding
\cite{pont2022principal}, our framework is simpler, less constrained and
slightly faster in practice.}


A natural direction for future work is the extension of our framework to other
topological
\julien{descriptors}
such as Reeb graphs or Morse-Smale complexes.
However, this requires the definition of key geometrical tools, such as
\julien{geodesic}
or barycenter computation \julien{algorithms}, which is still an active
research problem. We believe our \julien{framework for the dictionary encoding
of persistence diagrams is an interesting practical step for the analysis of
collections of persistence diagrams. In the future, we will
continue our investigation of the adaptation of tools from optimal transport to
the analysis of ensembles of topological descriptors, as we believe it can
become a key solution in the long term for the advanced analysis of
large-scale ensembles.}

\section*{Acknowledgments}{\small
This work is partially supported by the European Commission grant
ERC-2019-COG \emph{``TORI''} (ref. 863464, \url{https://erc-tori.github.io/}).}

\bibliographystyle{abbrv}

\bibliography{template}

\begin{IEEEbiography}[
{\includegraphics[width=1in,height=1.25in,clip,
keepaspectratio]{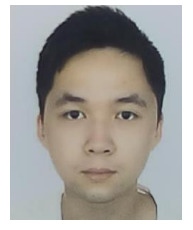}}]{Keanu Sisouk}
is a Ph.D. student at Sorbonne University. He received his master degree in Mathematics from Sorbonne University in 2021. His fields of interests lie on topological methods for data analysis, optimal transport, optimization methods, statistics and partial derivative equations.
\end{IEEEbiography}

\begin{IEEEbiography}[
{\includegraphics[width=1in,height=1.25in,clip,
keepaspectratio]{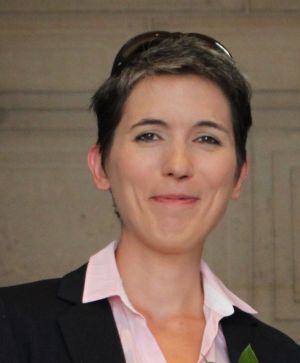}}]{Julie Delon}
received the Pd.D. degree in Mathematics from the Ecole Normale Supérieure Cachan in 2004.
She is currently a professor at Paris-Cité University since 2013. Prior to her professor tenure, she was a CNRS researcher affiliated with TELECOM ParisTech.
Her research interests lies in optimal transport, image processing, inverse problems and stochastic models for image restoration and editing.
\end{IEEEbiography}

\begin{IEEEbiography}[
{\includegraphics[width=1in,height=1.25in,clip,
keepaspectratio]{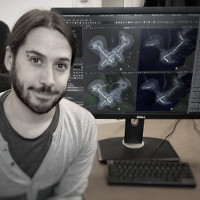}}]{Julien Tierny}
received the Ph.D. degree in Computer Science from the University
of Lille in 2008.
He is currently a CNRS research director, affiliated with
Sorbonne University. Prior to his CNRS tenure, he held a
Fulbright fellowship (U.S. Department of State) and was a post-doctoral
researcher at the Scientific Computing and Imaging Institute at the University
of Utah.
His research expertise lies in topological methods for data analysis
and visualization.
He is the founder and lead developer of the Topology ToolKit
(TTK), an open source library for topological data analysis.
\end{IEEEbiography}

\cleardoublepage

\title{Appendix}

%
%



\appendices

\section{\julien{$\nabla \weightEnergy$ is $L$-Lipschitz}}

\begin{Pro}
Let $X$ be a persistence diagram and $\Dict = (a_1,\hdots,a_m)$ a
Wasserstein dictionary of persistence diagrams.
If the optimal matchings  are constant, then
\julien{$\weightEnergy(\Lamb)
=
\War\big(\BaryLamb ,X\big)$} is convex
\julien{and $\nabla \weightEnergy$ is $L$-Lipschitz}
on $\Sigma_m$.
\end{Pro}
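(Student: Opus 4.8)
The plan is to freeze the matchings $\AssI$ (between $\BaryLamb$ and the atoms) and $\psi$ (between $\BaryLamb$ and $X$), so that the energy reduces to an explicit quadratic form in $\Lamb$. Starting from \autoref{eq_energyFixedAssignment}, namely $\weightEnergy(\Lamb) = \sum_{j=1}^{K}\bigl\|\sum_{i=1}^{m}\lambda_i(a_i^{\AssI(j)} - x^{\psi(j)})\bigr\|^2$, I would introduce for each index $j$ the $2\times m$ matrix $A_j$ whose $i$-th column is the birth/death vector $a_i^{\AssI(j)} - x^{\psi(j)} \in \mathbb{R}^2$. The inner sum is then exactly $A_j\Lamb$, so $\weightEnergy(\Lamb) = \sum_{j=1}^{K}\|A_j\Lamb\|^2 = \Lamb^T H \Lamb$ with $H = \sum_{j=1}^{K} A_j^T A_j$. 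The crucial point is that constant matchings make $H$ a fixed matrix, independent of $\Lamb$; the use of the constraint $\sum_i \lambda_i = 1$ (as in \autoref{eq_energyFixedAssignment}) is what lets me write each summand as a genuinely linear, rather than merely affine, function of $\Lamb$.

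Convexity then follows at once: each term $\|A_j\Lamb\|^2 = \Lamb^T A_j^T A_j \Lamb$ is the squared Euclidean norm of a linear map, hence convex, and $H$ is positive semidefinite as a sum of the Gram matrices $A_j^T A_j \succeq 0$. Thus $\weightEnergy$ is a convex quadratic, and its restriction to the convex simplex $\Sigma_m$ is convex.

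For the Lipschitz property of the gradient, I would use that a quadratic with constant Hessian has an affine gradient. Differentiating gives $\nabla\weightEnergy(\Lamb) = 2H\Lamb$, in agreement with \autoref{eq_weightGradient}. Hence, for any $\Lamb, \Lamb' \in \Sigma_m$, $\|\nabla\weightEnergy(\Lamb) - \nabla\weightEnergy(\Lamb')\| = 2\|H(\Lamb - \Lamb')\| \le 2\|H\|_{\mathrm{op}}\,\|\Lamb - \Lamb'\|$, which establishes that $\nabla\weightEnergy$ is $L$-Lipschitz with $L = 2\|H\|_{\mathrm{op}}$. To recover the explicit constant underlying the step-size condition, I would bound the operator norm by the triangle inequality and then by the Frobenius norm: $\|H\|_{\mathrm{op}} \le \sum_{j=1}^{K}\|A_j^T A_j\|_{\mathrm{op}} = \sum_{j=1}^{K}\|A_j\|_{\mathrm{op}}^2 \le \sum_{j=1}^{K}\|A_j\|_F^2 = \sum_{j=1}^{K}\sum_{i=1}^{m}\|a_i^{\AssI(j)} - x^{\psi(j)}\|^2$, which is exactly the squared matrix norm appearing in the admissible range of $\rho_{\Lamb}$.

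The two facts doing all the work are the positive-semidefiniteness of $H$ and the reduction to a constant-Hessian quadratic; everything else is routine linear algebra. The only genuinely delicate point, and the one I would be most careful about, is the bookkeeping around the affine constraint: the form $\Lamb^T H \Lamb$ coincides with the true energy only on the hyperplane $\sum_i\lambda_i = 1$, so convexity and the gradient identity must be read as statements about the restriction of $\weightEnergy$ to $\Sigma_m$ (equivalently, about the gradient used in \autoref{eq_weightGradient}), and one must stay consistent that the stated bound uses the Frobenius norm.
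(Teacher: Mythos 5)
Your proposal is correct and follows essentially the same route as the paper's Appendix~A: freeze the matchings, write the energy as $\sum_j \|H^j \Lamb\|^2$ for fixed matrices $H^j$ whose columns are the difference vectors $a_i^{\AssI(j)} - x^{\psi(j)}$, obtain convexity from the positive semidefiniteness of the resulting constant Hessian, and bound the Lipschitz constant by $2\sum_j\|H^j\|^2$ to get the step-size condition. Your explicit distinction between the operator and Frobenius norms, and your remark that the quadratic form represents the energy only on the affine hull of $\Sigma_m$, are slightly more careful than the paper's write-up but do not change the argument.
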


\begin{proof}
Let $\Lamb = (\lambda_1, \hdots , \lambda_m) \in \Sigma_m$, $\BaryLamb = \big(y^1(\Lamb),\hdots, y^K(\Lamb)\big)$ the barycenter computed and $\phi_{\Lamb , 1} , \hdots , \phi_{\Lamb , m}$ the matchings between $\BaryLamb$ and each atom $(a_1 , \ldots , a_m)$:
\begin{equation}
\forall j \in \{1,\hdots, K\},\ y^j(\Lamb) = \sum\limits_{i=1}^{m}\lambda_i a_i^{\phi_{\Lamb,i}(j)}.
\end{equation}
We suppose the optimal matchings to be constant, thus we write $\AssI = \phi_{\Lamb,i}$. Like before we consider the following gradient:

\begin{equation}
\nabla y^j(\Lamb) = \begin{bmatrix}
a_1^{\Ass(j)} & \ldots & a_m^{\AssM(j)}
\end{bmatrix}.
\end{equation}
Now recall the following expression for:
\begin{equation}
\War\big(\BaryLamb ,X\big) = \min_{\psi_{\Lamb} :
\julien{i_X}
\xrightarrow{bij} i_X} \left( \sum\limits_{j=1}^{K} \| y^j(\Lamb)
- x^{\psi_{\Lamb}(j)} \|^2 \right).
\end{equation}
This minimum is always attained, and with the hypothesis on the optimal matchings we write $\psi = \psi_{\Lamb}$. Thus we rewrite:
\begin{equation}
\War\big(\BaryLamb ,X\big) =  \sum\limits_{j=1}^{K} \| y^j(\Lamb) - x^{\psi(j)} \|^2 = \sum\limits_{j=1}^{K} \| \sum\limits_{i=1}^{m} \lambda_i (a_i^{\AssI(j)} - x^{\psi(j)}) \|^2. 
\end{equation}
\julien{$\War\big(\BaryLamb ,X\big)$ is convex with $\Lamb$ and the} gradient
follows naturally:
\begin{equation}
\nabla \War\big(\BaryLamb , X\big) = 2
\sum\limits_{j=1}^{K}
\begin{bmatrix}
(a_1^{\Ass(j)} - x^{\psi(j)})\julien{^T} \\
\vdots \\ 
(a_m^{\AssM(j)} - x^{\psi(j))\julien{^T}}
\end{bmatrix}\big(y^j(\Lamb) - x^{\psi(j)}\big).
\end{equation}
For the following part we denote $H^j = \begin{bmatrix}
a_1^{\Ass(j)} - x^{\psi(j)} & \ldots & a_m^{\AssM(j)} - x^{\psi(j)}
\end{bmatrix}$. The Hessian then writes as
$H = H(\Lamb) = 2 \sum\limits_{j=1}^{K} (H^j)^T H^j.$ This shows that $\Lamb \mapsto \War\big(\BaryLamb ,X\big)$ is convex. Indeed for $u \in \mathbb{R}^m$ we have:
\begin{equation}
u^T H u = 2 \sum\limits_{j=1}^{K} u^T(H^j)^T (H^j) u=  2 \sum\limits_{j=1}^{K} \| H^j u \|^2 \geq 0 
\end{equation}
This also shows that \keanu{$\nabla\weightEnergy $ is $L$-Lipschitz with $L = \|H \|$.} For numerical reasons, we bound \keanu{$L$} as follows:
\begin{equation}
\keanu{L} = \|H \| = 2\left\lVert \displaystyle\sum_{j=1}^{K} (H^j)^T(H^j) \right\rVert \leq  2 \displaystyle\sum_{j=1}^{K} \| (H^j)^T(H^j) \| =  2 \displaystyle\sum_{j=1}^{K} \| H^j \|^2.
\end{equation}
Thus for our algorithm, we consider the following gradient step:
\begin{equation}
\rho \leq \left[2 \displaystyle\sum_{j=1}^{K} \| H^j \|^2\right]^{-1}.
\end{equation}
\end{proof}

~

\section{\julien{$\nabla \individualAtomEnergy$ is $L$-Lipschitz}}

\begin{Pro}
Let $X$ be a persistence diagrams and $\Lamb = (\lambda_1, \hdots, \lambda_m)
\in \Sigma_m$. If the optimal matchings are constant, the functions
$\individualAtomEnergy$
are convex and \keanu{$\nabla \individualAtomEnergy$
is $L$-Lipschitz.}
\end{Pro}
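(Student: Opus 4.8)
The plan is to reuse the template of the previous proposition (the $\weightEnergy$ case), but with the roles of the weights and the atom points exchanged: here the variable is the matrix $\Dict^j = [a_1^{\Ass(j)}, \hdots , a_m^{\AssM(j)}]^T$ of atom points matched to $y^j$, while $\Lamb$ and the target $x^{\psi(j)}$ are held fixed. First I would freeze the optimal assignments $\AssI$ and $\psi$ (this is exactly the constancy hypothesis), so that $\individualAtomEnergy(\Dict^j) = \lVert \sum_{i=1}^{m} \lambda_i (a_i^{\AssI(j)} - x^{\psi(j)}) \rVert^2$ becomes a genuine function of $\Dict^j$ alone. Using $\sum_i \lambda_i = 1$, I would rewrite it as $\lVert \big(\sum_i \lambda_i a_i^{\AssI(j)}\big) - x^{\psi(j)} \rVert^2$, i.e. the squared Euclidean distance from the barycentric point $y^j(\Dict) = \sum_i \lambda_i a_i^{\AssI(j)}$ to the fixed point $x^{\psi(j)}$.

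Convexity then follows immediately: the map $\Dict^j \mapsto y^j(\Dict)$ is affine (linear in the atom coordinates), and $z \mapsto \lVert z - x^{\psi(j)} \rVert^2$ is convex, so $\individualAtomEnergy$ is convex as the composition of a convex function with an affine map. To obtain the quantitative Lipschitz bound I would make this explicit via the Hessian. Differentiating the gradient already recorded in \autoref{eq_pointWiseGradient} a second time, the $(i,k)$ block of the Hessian (with respect to the $2$-dimensional coordinate blocks of the atom points) is $2\lambda_i \lambda_k I_2$, so the full $(2m \times 2m)$ Hessian is $H = 2 (\Lamb \Lamb^T) \otimes I_2$. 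For any $u = (u_1, \hdots , u_m)$ with $u_k \in \mathbb{R}^2$ one computes $u^T H u = 2\lVert \sum_{k=1}^{m} \lambda_k u_k \rVert^2 \geq 0$, re-confirming convexity.

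Finally, since $\Lamb \Lamb^T$ is rank one with single nonzero eigenvalue $\lVert \Lamb \rVert^2$, the operator norm of the Hessian is $\lVert H \rVert = 2 \lVert \Lamb \rVert^2 = 2\sum_{i=1}^{m} \lambda_i^2$, which is precisely the Lipschitz constant $L$ of $\nabla \individualAtomEnergy$. Bounding it over the simplex (e.g. $\sum_i \lambda_i^2 \leq \sum_i \lambda_i = 1$, hence $L \leq 2$, and a fortiori $L < 4m$) justifies the step-size rule $\rho_\Dict < (4m)^{-1} < L^{-1}$ announced in \autoref{subsect:atom}. I expect the only real subtlety to be the bookkeeping of the Hessian over the $2m$-dimensional space of atom coordinates rather than over the simplex: one must correctly expose the rank-one Kronecker structure $(\Lamb \Lamb^T) \otimes I_2$ in order to read off its spectrum, and hence the sharp constant. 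Everything else (convexity, and the sign of $u^T H u$) is routine once the energy is written as the squared distance of an affine image of $\Dict^j$.
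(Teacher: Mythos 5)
Your proof is correct and follows essentially the same route as the paper's: freeze the assignments, observe that the pointwise atom energy is a quadratic in the atom points with Hessian $2\,\Lamb\Lamb^T$ (tensored with $I_2$ in your more careful bookkeeping), deduce convexity from positive semidefiniteness and read off the Lipschitz constant from the operator norm. Your only departure is cosmetic but favorable: you exploit $\sum_i \lambda_i^2 \leq 1$ on the simplex to get the sharper bound $L \leq 2$, whereas the paper settles for $L \leq 2\|\Lamb\|^2 \leq 2m$ before loosening further to $4m$; both validate the step-size rule $\rho_\Dict \leq (4m)^{-1}$.
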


\begin{proof}
Let $U = (u_1, \hdots , u_m) \in (\mathbb{R}^2)^m$, for $j \in \{1,\hdots,K\}$ we have:
\begin{equation}
\keanu{\individualAtomEnergy(U)} =  \left\lVert \sum\limits_{i=1}^{m} \lambda_i (u_i - x^{\psi(j)}) \right\rVert^2
\end{equation}
The gradient follows naturally:
\begin{equation}
\label{eqn:gradientAtom}
\nabla \keanu{\individualAtomEnergy(U)} = 2 \begin{bmatrix}
\lambda_1\\
\vdots\\
\lambda_m
\end{bmatrix} \big(u_i - x^{\psi(j)}\big)\keanu{^T}
\end{equation}
Immediately we have the Hessian $H_j = H_{g_j}(U) = 2 \Lamb \Lamb^T$, giving us the convexity \keanu{of $\individualAtomEnergy$} and \keanu{the $L$-Lipschitzianity of $\nabla \individualAtomEnergy$ with $L = \|H_j\| \leq 2\|\Lamb\|^{2} \leq 2m$.} For numerical reasons, we consider the larger upper bound: \keanu{$L\leq 4 m$}.
Thus for our algorithm, we consider the following gradient step $\rho \leq (4m)^{-1}$.
\end{proof}

\begin{figure*}[ht]
\centering
\includegraphics[width=\linewidth]{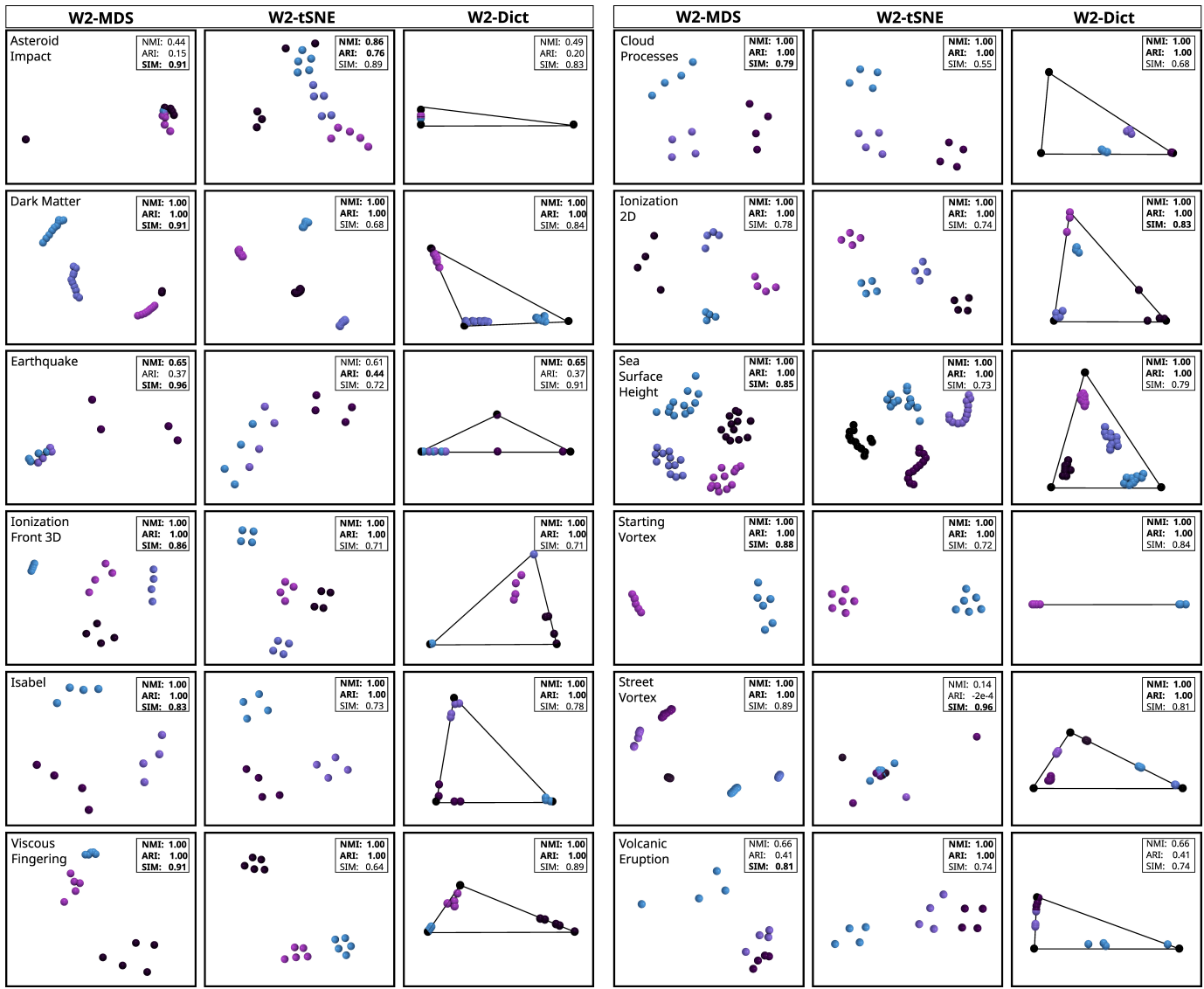}
\caption{\julien{Comparison of the planar layouts for typical dimensionality
reduction techniques on all our test ensembles. The color encodes the
classification ground-truth \cite{pont_vis21}. For each quality score, the
best value appears bold. For the \emph{Sea Surface Height} ensemble, the naive
optimization procedure has been used (cf. Sec. 6.3 of the main manuscript).}}
\label{fig:allDimReduct2D}
\end{figure*}

\section{\julien{Dimensionality reduction}}
\julien{\autoref{fig:allDimReduct2D} extends \julienQuestion{Figure 10} (main
manuscript) to all our test ensembles and it confirms visually the conclusions
of the table of quality scores (\julienQuestion{Table 3} of the main
manuscript).}


~

\section{Volcanic eruption ensemble}
\julienRevision{This appendix discusses the special case of the \emph{Volcanic
eruption} ensemble (12 members), for
which a
consistent energy increase can be observed in
the Figure 13 of
the main manuscript (normalized energy of our multi-scale optimization as a
function of computation time), beyond 70\% of the completion time (the
optimization reaches the stopping conditions at 100\%).}

\julienRevision{The ground-truth classification of this ensemble contains 3
classes \cite{pont_vis21}. However, one of these classes contains a clear
outlier (light purple entry in the bottom views of \autoref{fig:volcano}),
corresponding to a peak of activity in the eruption (see the terrain views of
4 members, bottom left of \autoref{fig:volcano}, including the outlier,
light purple frame). The corresponding persistence diagram (light purple
diagram in the aggregated birth/death space, bottom middle of
\autoref{fig:volcano}) contains features which are significantly more persistent
than the other diagrams (taken from distinct ground-truth classes, one color per
class). Then, this outlier exhibits an excessively high distance to the rest of
the ensemble, as illustrated in the Wassertein distance matrix (bottom right of
\autoref{fig:volcano}, light purple entry).}

\julienRevision{The presence of this outlier challenges our optimization when
using a number of atoms equal to the number of ground-truth classes (which is
the default strategy documented in the main manuscript). As shown in the energy
plots (\autoref{fig:volcano}, top), a consistent energy increase can be
observed when using only 3 atoms (1 per ground-truth class, black curve). When
removing the outlier, the energy evolution exhibits a more
characteristic oscillating behavior (green curve). Finally, when initializing
the optimization with 4 atoms (1 per class, plus 1 for the outlier), the
optimization results in few oscillations and a consistent energy decrease
(yellow curve). This indicates that the outlier member (light purple) should be
interpreted as a singleton class and that the best dictionary encoding will
consequently be obtained with 4 atoms.}

\begin{figure}
\centering
\subfloat{
\begin{tikzpicture}
\begin{axis}[group style={group name=plots,},xlabel={Completion time ratio},
ylabel={Normalized energy}, legend cell align={left}]
\addplot[curve1] table [x=Timers, y=Loss, col sep=comma] {Appendix/PlotVolcanoLoss.csv};
\addlegendentry{3 atoms, 12 members.}
\addplot[curve2] table [x=Timers, y=Loss, col sep=comma] {Appendix/PlotVolcanoThreeBordersNoOutlier.csv};
\addlegendentry{3 atoms, 11 members (outlier removed).}
\addplot[curve3] table [x=Timers, y=Loss, col sep=comma] {Appendix/PlotVolcanoFourChosenAtoms.csv};
\addlegendentry{4 atoms, 12 members.}
\end{axis}
\end{tikzpicture}
}\\
\subfloat{
\includegraphics[width=\linewidth]{volcanoConvCurveCompare2.jpg}
}
\caption{\julienRevision{Evolution of the (normalized) energy
$\dictionaryEnergy$
along the optimization (top curves), with our multi-scale strategy, for the
\emph{Volcanic eruption ensemble}, for distinct initializations.
The ground-truth classification of this ensemble contains 3 classes
\cite{pont_vis21}, including one outlier (light purple entry in the bottom views,
from left
to right: terrain view of the data, aggregated birth/death space, distance
matrix).
A clear energy increase can be observed when considering the entire ensemble
(black curve), while a more characteristic oscillating behavior occurs when
discarding the outlier (green curve). When initializing the optimization with
4 atoms (1 per
class, plus 1 for the outlier), the optimization results in few
oscillations and a consistent energy decrease (yellow curve).}}


%
\label{fig:volcano}
\end{figure}


\end{document}